\newtheorem{definition}{Definition}
\newtheorem{protocol}{Protocol}
\newtheorem{theorem}{Theorem}
\newtheorem{lemma}{Lemma}
\newcommand{\name}{\textsc{FairZK}\xspace}
\renewcommand{\paragraph}[1]{\smallskip\noindent\textbf{#1}}
\newcommand{\ms}[1]{\mathsf{#1}}
\newcommand{\floor}[1]{{\lfloor #1 \rfloor}}
\newcommand{\norm}[1]{\|#1\|}
\newcommand{\mat}[1]{\mathbf{#1}}
\newcommand{\binary}{\{0, 1\}}
\newcommand{\tV}{\tilde{V}}
\newcommand{\tadd}{\tilde{add}}
\newcommand{\tmult}{\tilde{mult}}
\newcommand{\F}{\mathbb{F}}
\newcommand{\V}{\mathcal{V}}
\renewcommand{\P}{\mathcal{P}}
\renewcommand{\S}{\mathcal{S}}
\newcommand{\pp}{\mathsf{pp}}
\newcommand{\pcs}{\mathsf{PCS}}
\newcommand{\Setup}{\mathsf{Setup}}
\newcommand{\Commit}{\mathsf{Commit}}
\newcommand{\Open}{\mathsf{Open}}
\newcommand{\Verify}{\mathsf{Verify}}
\newcommand{\com}{\mathsf{com}}
\newcommand{\inner}[1]{\left< #1 \right>}
\newcounter{mytable}
\def\mytable{\begin{centering}\refstepcounter{mytable}}
\def\endmytable{\end{centering}}
\newcounter{myfig}
\def\myfig{\begin{centering}\refstepcounter{myfig}}
\def\endmyfig{\end{centering}}
\newif\ifcameraready
\newif\ifeprint
\newif\ifshowcomment
\newcommand{\yupeng}[1]{{\color{red}[Yupeng: #1]}}
\newcommand{\yanning}[1]{{\color{blue}[Yanning: #1]}}
\newcommand{\yupeng}[1]{}
\newcommand{\yanning}[1]{}
\newtheorem*{rep@theorem}{\rep@title}
\newcommand{\newreptheorem}[2]{%
\newenvironment{rep#1}[1]{%
 \def\rep@title{#2 \ref{##1}}%
 \begin{rep@theorem}}%
 {\end{rep@theorem}}}
\newcommand{\new}[1]{#1}
\author{
    \IEEEauthorblockN{
        Tianyu Zhang\textsuperscript{*}\IEEEauthorrefmark{2}\IEEEauthorrefmark{4},
        Shen Dong\textsuperscript{*}\IEEEauthorrefmark{2}\IEEEauthorrefmark{4},
        O. Deniz Kose\textsuperscript{*}\IEEEauthorrefmark{3},
        Yanning Shen\IEEEauthorrefmark{3}, 
        Yupeng Zhang\IEEEauthorrefmark{2}
    }
    \\
    \IEEEauthorblockA{
       \IEEEauthorrefmark{2}\textit{University of Illinois, Urbana-Champaign}, \\
        \IEEEauthorrefmark{3}\textit{University of California, Irvine},
       \IEEEauthorrefmark{4}\textit{Shanghai Jiao Tong University}
    }
}
\begin{document}

\title{\name: A Scalable System to Prove Machine Learning Fairness \\in Zero-Knowledge}

\maketitle

\makeatletter
\renewcommand\@makefntext[1]{\noindent\makebox[0em][r]{\@makefnmark\ }#1}
\makeatother

\begingroup
\renewcommand\thefootnote{}
\footnotetext{\textsuperscript{*}Equal contribution. The work was partially done while the first two authors were undergraduate research assistants at UIUC. }
\endgroup

\begin{abstract}

With the rise of machine learning techniques, ensuring the fairness of decisions made by machine learning algorithms has become of great importance in critical applications. However, measuring fairness often requires full access to the model parameters, which compromises the confidentiality of the models. In this paper, we propose a solution using zero-knowledge proofs, which allows the model owner to convince the public that a machine learning model is fair while preserving the secrecy of the model. To circumvent the efficiency barrier of naively proving machine learning inferences in zero-knowledge, our key innovation is a new approach to measure fairness only with model parameters and some aggregated information of the input, but not on any specific dataset. To achieve this goal, we derive new bounds for the fairness of logistic regression and deep neural network models that are tighter and better reflecting the fairness compared to prior work. Moreover, we develop efficient zero-knowledge proof protocols for common computations involved in measuring fairness, including the spectral norm of matrices, maximum, absolute value, and fixed-point arithmetic. 

We have fully implemented our system, \name, that proves machine learning fairness in zero-knowledge. Experimental results show that \name is significantly faster than the naive approach and an existing scheme that use zero-knowledge inferences as a subroutine. The prover time is improved by 3.1$\times$--1789$\times$ depending on the size of the model and the dataset. \name can scale to a large model with 47 million parameters for the first time, and generates a proof for its fairness in 343 seconds. This is estimated to be 4 orders of magnitude faster than existing schemes, which only scale to small models with hundreds to thousands of parameters.

\end{abstract}

\section{Introduction}\label{sec:intro}

Machine learning (ML) algorithms are now deeply integrated into various aspects of our daily lives, including financial services, healthcare systems, and even criminal justice-related prediction tasks. While these models have achieved remarkable advancements, research has demonstrated that ML models propagate possible bias in training data and lead to \emph{unfair} outcomes for downstream applications \cite{awareness,fairdata}. For example, a recent study in~\cite{fuster2022predictably} demonstrated that ML algorithms used in credit risk prediction tasks discriminate against Black and Hispanic borrowers disproportionately when predicting the likelihood of the repayment, which plays a significant role in the approval of loans. In~\cite{angwin2019machine}, it has been shown that the ML models utilized for recidivism prediction across the US (Correctional Offender Management Profiling for Alternative Sanctions (COMPAS)) for who is tend to recommit a crime, or for who is expected to fail to appear at their court hearing are unreliable and racially biased. 

These examples have motivated active research on ML fairness, an important area aiming to mitigate the bias of ML algorithms. Numerous research efforts have focused on assessing the fairness of ML models, e.g., ~\cite{yan2020silva, lalor2024should}, see~\cite{mehrabi2021survey} for a survey. However, almost all existing fairness testing methods require white-box access to the ML model and the training dataset. In practice, the ML models are usually intellectual properties of companies and the dataset often contains sensitive information. It is very challenging to make them available to the public or an auditor to verify the fairness of the ML algorithms. Therefore, there is a great demand for a solution that allows the public verification of the fairness of ML models in the aforementioned applications, while preserving the confidentiality of the models. 

In this paper, we propose to address this problem using the cryptographic primitive of zero-knowledge proofs (ZKPs). A ZKP allows a \emph{prover} to convince a \emph{verifier} that a statement on the prover's secret witness is true, without revealing any additional information about the secret. In the context of ML fairness, the prover's secret is the ML model, and the prover can convince the public that the model is fair without compromising the confidentiality of the model. Unfortunately, applying ZKPs naively would not result in an efficient and scalable solution. Taking the statistical parity~\cite{dwork2012fairness} as an example, which is a commonly utilized \emph{group fairness} metric, given a sensitive attribute $s$ such as gender and race, and two sensitive subgroups with  $s=0$ and $s=1$, it is defined as
\[
\Delta_{SP}=|P(\hat{y}=1 \mid s=0)-P(\hat{y}=1 \mid s=1)|
\]
where $\hat{y}$ denotes the predicted outcome of the binary classification ML model. To show that an ML model is fair under this definition, the prover could use a ZKP protocol to prove that the predictions of the ML model on a public dataset over two sensitive subgroups do not differ by much. However, this would require multiple invocations of ZKPs for the inferences of the model, which is not practical yet for large models and is an active research topic on its own. Moreover, even if the proof can be generated, the fairness only holds on this specific dataset, and may not generalize to other inputs in real-world applications. 

\begin{figure}[t!]
    \centering
    \includegraphics[width=0.49\textwidth]{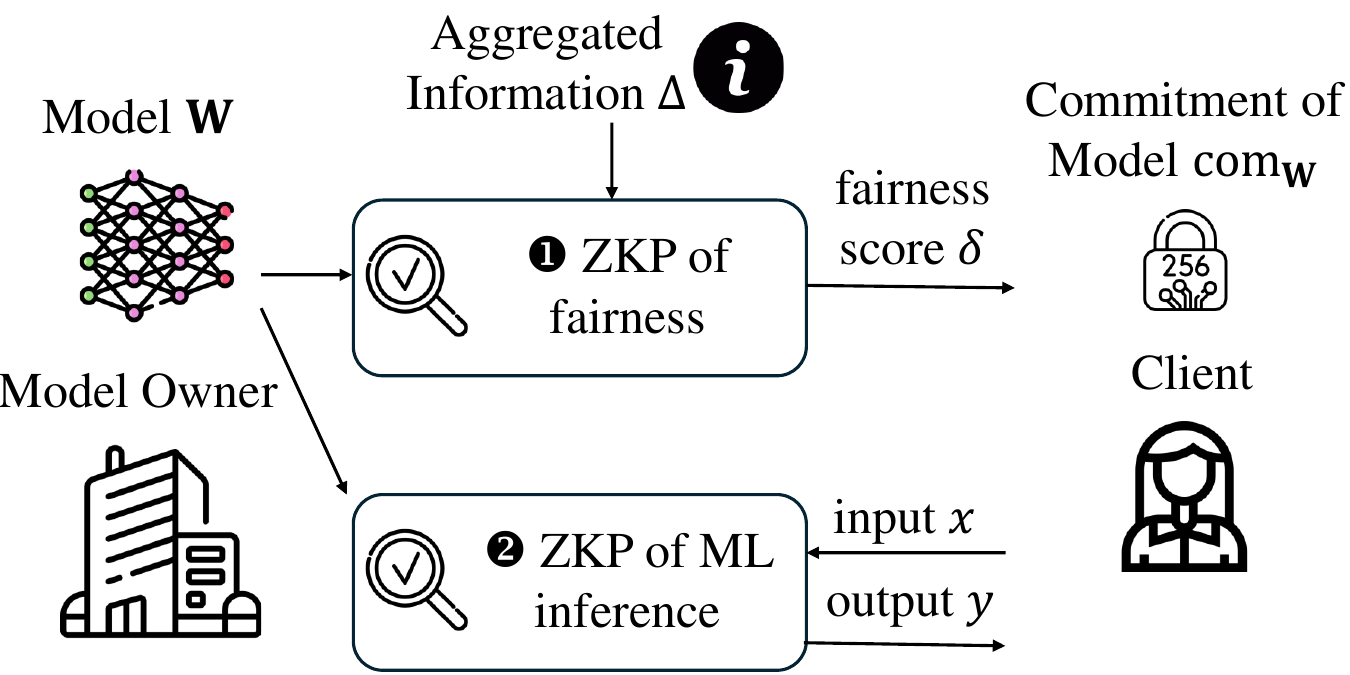}
    \caption{Design of our ZKP of ML fairness.}
    \label{fig:design}
\end{figure}

\paragraph{Our Solution.} In this paper, we propose a fundamentally different approach. The key idea of our solution is to decouple the ZKP of fairness from the naive inferences of the ML model. Ideally, the ZKP only takes the ML model as the secret input of the prover, and evaluates whether the model is fair or not without accessing any specific dataset or data sample. Of course fairness cannot hold universally for any input unless the model is trivial (always output 1 or a random outcome). Therefore, the ZKP also takes some ``aggregated information'' about the data samples that is true for all inputs in the applications. See Section~\ref{subsec:delta} for the formal definitions. For example, the information can be obtained from the aggregated statistics released by the Census Bureau for a region or the entire country. With the ML model and the aggregated information, the fairness of the model is proven in the ZKP.

In this way, the fairness claim holds for any input with such aggregated information, instead of a specific dataset. Moreover, our solution can break the barrier of ZKPs for ML inferences and scale to much larger ML models than those in prior work. Figure~\ref{fig:design} illustrates the design of our solution. The model owner proves to the public that the secret ML model achieves good fairness measured by a ``fairness score'' which we define later, in ZKP using our approach. The verifier can verify the claim with the aggregated information and a commitment of the ML model. This is realized through the commit-and-prove ZKP. Later in real-world applications, for each input, the client can further verify that the decision is indeed made by the same fair model under the commitment. We focus on the first step of proving the fairness of the ML model in ZKP in this paper. In the case where the aggregated information is not publicly available, we further consider a second setting where we can compute it from a large private dataset with a ZKP. See Section~\ref{subsec:setting} for the detailed description of the settings and the threat model.

\paragraph{Our Contributions. } Our contributions can be summarized as follows: 

\begin{itemize}
    \item \textbf{New fairness bounds.} First, to realize our solution, we study the fairness bounds of deep neural networks (DNNs). Crucially, our new bounds only depend on the DNN model weights $\mat{W}$ and aggregated information of the input, but not on any specific data sample or dataset. The techniques are inspired by the bounds for Graph Attention Networks in~\cite{kose2024fairgat}, but we remove the dependency of the bounds on the intermediate values of inferences and obtain tighter bounds than those in~\cite{kose2024fairgat}. 
    See Section~\ref{subsec:diff} for differences from and improvements over~\cite{kose2024fairgat}. As we show in the experiments in Section~\ref{subsec:boundexp}, our new bounds are better than~\cite{kose2024fairgat} and are closely correlated with the fairness of the ML models.
    
    \item \textbf{Specialized ZKP protocols.} Second, we develop efficient ZKP protocols for common operations involved in computing the fairness bounds. Specifically, we propose a new ZKP approach to prove the computation of the spectral norm with a prover time of $O(F^2)$ for a matrix of size $F\times F$. This is significantly faster than the naive approach of computing the spectral norm in ZKP, which incurs a prover time of $O(F^3)$ and is very slow in practice. Our micro-benchmark shows that our new protocol for spectral norm is 5.7--1045$\times$ faster than the naive implementation in a generic ZKP library. 
    For other computations including the absolute value, maximum and fixed-point arithmetic with truncation, we introduce optimizations tailored for our fairness bounds using the recent techniques of lookup arguments.

    \item \textbf{Implementation and evaluations.} Finally, we fully implement our system, \name, for proving the fairness of ML models in zero-knowledge. We benchmark the performance of \name and compare it with prior work and the naive approach of testing the fairness via ZKP of inferences. We demonstrate that \name is $3.1\times$--$1789\times$ faster than the baselines. Moreover, \name can scale to a large DNN with 47 million parameters for the first time. The prover time is only 343 seconds, and is estimated to be 4 orders of magnitude faster than prior work. The code of our implementation is available at \yupeng{update} \url{https://github.com/tnyuzg/FairZK}.
\end{itemize}

\subsection{Related Work}\label{subsec:related}

ZKPs for ML inferences have been studied in the earlier work of~\cite{zhang2020zero,liu2021zkcnn,lee2024vcnn,feng2021zen,weng2021mystique,kang2022scaling,weng2023pvcnn,chen2024zkml,sun2024zkllm}. The most recent scheme named zkLLM in~\cite{sun2024zkllm} scales to proving the inference of a large language model with 13 billion parameters and can generate the proof of one inference in 15 minutes using GPUs. Despite the great progress, the prover time of inference is still high and it would be impractical to naively apply these schemes to prove the fairness of an ML model. Recently, these techniques have been extended to prove the outcome of ML training in~\cite{garg2023experimenting,sun2023zkdl,abbaszadeh2024zero} and with differential privacy in~\cite{shamsabadi2024confidential}. 

The most relevant work to our paper is in~\cite{shamsabadi2022confidential,yadav2024fairproof,franzese2024oath}. Shamsabadi et al.~\cite{shamsabadi2022confidential} proposed a ZKP scheme to prove the fairness of the training of decision trees. Yadav et al.~\cite{yadav2024fairproof} proposed a ZKP scheme to prove the individual fairness of an ML model through the connection to its robustness. Franzese et al.~\cite{franzese2024oath} proposed a probabilistic method to prove the ML fairness through ZKP for inferences. It reduces the number of proofs from one million to 7600 with good robustness of the fairness testing. However, none of the prior work can scale to large ML models used in real-world. For example, the scheme in~\cite{yadav2024fairproof} scales to a small neural network with 2 hidden layers and 130 parameters and generates the proof in several minutes; the prover time of the scheme in~\cite{franzese2024oath} is estimated to be 343,000 seconds to prove the fairness of LeNet with 62,000 parameters. The major overhead comes from the execution of the ZKP for inferences as a subroutine. 

\paragraph{Testing ML fairness.} 
Several tools have been developed in recent years to test the fairness of ML models, such as \cite{chen2024fairness,zhang2020white,zheng2022neuronfair,adebayo2016fairml,tramer2017fairtest,saleiro2018aequitas,bellamy2019ai}. 
These toolkits require full possession of the ML models and full access to datasets to launch the testing. In this paper, we consider proving the fairness without revealing the model using ZKPs.

\section{Preliminaries}\label{sec:prelim}

We use boldface lowercase letters $\bm{a},\bm{b}$ to denote vectors, and boldface uppercase letters $\mat{A},\mat{B}$ to denote matrices. Vectors are column vectors by default. $[n]$ denotes the set $\{0,1,\ldots, n-1\}$. PPT stands for probabilistic polynomial time. $\mathsf{negl}(\lambda)$ denotes the negligible function in the security parameter $\lambda$.

\subsection{Machine Learning and Fairness}\label{subsec:mlprelim}

Multiple notions of fairness have been proposed, including the \emph{group fairness} measures of statistical parity \cite{dwork2012fairness}, equal opportunity \cite{hardt2016equality}, equalized odds \cite{hardt2016equality}, treatment equality \cite{berk2021fairness}, and test fairness \cite{chouldechova2017fair}, and the \emph{individual fairness} measures of unawareness \cite{kusner2017counterfactual, grgic2016case}, awareness \cite{dwork2012fairness}, and counterfactual fairness \cite{kusner2017counterfactual}. We focus on the group fairness in this paper and use the definitions of statistical parity and equal opportunity. Formally speaking, given a binary sensitive attribute $s$ and two sensitive subgroups $\mathcal{S}_0, \mathcal{S}_1$, they are defined as 
\begin{align}
\Delta_{SP}:=&|P(\hat{y}=1 \mid s=0)-P(\hat{y}=1 \mid s=1)| \label{criterion:sp}  \\ 
\Delta_{EO}:=&|P(\hat{y}=1 \mid y=1, s=0)- \nonumber\\
&P(\hat{y}=1 \mid y=1, s=1)|\label{criterion:eo}
\end{align}
where $y$ denotes the ground-truth label and $\hat{y}$ denotes the predicted label. In particular, the statistical parity definition aims to equalize the positive rate in two sensitive groups, thus achieving an equal distribution in the long run, while the equal opportunity definition considers the ground-truth label and tries to achieve similar accuracy across different sensitive groups. The naive approach of removing the sensitive attribute $s$ from the training data does not work, as the biases can still be indirectly introduced via attributes correlated with the sensitive attributes. 

\paragraph{Logistic regression and DNN.} For input data $\bm{x}$ with $F$ features, a logistic regression (LR) model is a vector $\bm{w}$ of dimension $F$ such that the predicted output $\hat{y} = \sigma(\inner{\bm{w},\bm{x}})$, where $\sigma$ is the sigmoid activation function $\sigma(z) = \frac{1}{1+e^{-z}}$, and $z=\inner{\bm{w},\bm{x}}$ denotes the inner product of the two vectors.  

A DNN model of $m$ layers is defined by $m$ matrices of model parameters. We focus on fully-connected neural networks, or multi-layer perceptron (MLP), but we believe our results can be generalized to other DNN models. The dimension of the $\ell$-th matrix $\mat{W}^\ell$ is $F_{\ell+1}\times F_\ell$ for $\ell \in[m]$. The number of features of the input data $\bm{x}$ is $F_0$. Starting from the input layer with $\bm{h}^0 = \bm{x}$, the output of the $\ell$-th fully-connected layer is computed as $\bm{z}^{\ell+1} = \mat{W}^\ell\cdot\bm{h}^\ell$, where $\cdot$ denotes matrix-vector multiplication here. Each fully-connected layer is followed by a nonlinear layer $\bm{h}^{\ell+1} = \sigma(\bm{z}^{\ell+1})$. Here we abuse the notation and use 
$\sigma$ to denote the activation function, which is applied element-wise on $\bm{z}^{\ell+1}$. Common activation functions include the sigmoid function and the ReLU function $\sigma(z) =\max(0,z)$. The output of the last layer is the predicted label of the DNN model. To quantify fairness, we consider binary classification models where the dimension of the last layer is $F_m = 1$ and $\hat{y} = h^m$ is the output of a sigmoid function.  

\paragraph{Lipschitz continuity.} A function $f$ is Lipschitz continuous if there exists a positive real constant $L$ such that, for all $x_1, x_2$, 
\begin{equation}\label{eq:lipschitz}
    |f(x_1)-f(x_2)|\le L |x_1-x_2|.
\end{equation}
$L$ is called the Lipschitz constant of the function. Common activation functions in DNNs are all Lipschitz continuous. For example, $L=0.25$ for the sigmoid function, and $L=1$ for ReLU. 

\paragraph{L2 norm and spectral norm.} The L2 norm (Euclidean norm) of a vector $\bm{x}=(x_0,\ldots, x_{F-1})$ is defined as $\norm{\bm{x}}_2 = \sqrt{\sum_{i=0}^{F-1}x_i^2}$. The norm of a matrix $\mat{W}$ induced by the L2 norm of vectors is called the spectral norm, denoted as $\norm{\mat{W}}_2$ in this paper. By the Cauchy–Schwarz inequality, we have $\norm{\mat{W}\cdot\bm{x}}_2\le \norm{\mat{W}}_2\norm{\bm{x}}_2$. The spectral norm of a matrix $\mat{W}$ equals to the square root of the largest eigenvalue of the square matrix $\mat{W} \cdot \mat{W}^T$, as well as that of $\mat{W}^T\cdot \mat{W}$. 

\subsection{Zero-Knowledge Proofs}

\begin{definition}\label{def:snark}[zero-knowledge Succinct non-interactive argument of knowledge (zkSNARK)] Let $\mathcal{R}$ be a relationship with public instance $x$ and private witnesses $w$, a zkSNARK for $\mathcal{R}$ consists of PPT algorithms ($\mathcal{G}, \mathcal{P}, \mathcal{V}$) with the following properties:

\begin{itemize}
    \item Completeness. For any instance $(x, w)\in \mathcal{R}$

\begin{equation*}    
\Pr \left[ 
\begin{tabular}{ c|c }
 \multirow{2}{6.5em}{$\mathcal{V}(x, vk, \pi)=1$} & $(pk, vk) \gets \mathcal{G}(1^\lambda)$, \\
  & $\pi \gets \mathcal{P}(x, w, pk)$, \\
\end{tabular}
\right]=1 
\end{equation*}

    \item Knowledge Soundness: for any PPT adversary $\mathcal{A}$, there exists an expected PPT knowledge extractor $\mathcal{E}_\mathcal{A}$ such that the following probability is $\le \mathsf{negl}(\lambda)$:

\begin{equation*}    
\Pr \left[ 
\begin{tabular}{ c|c }
 $\mathcal{V}(x, vk, \pi^*)=1$
  & $(pk, vk) \gets \mathcal{G}(1^\lambda)$, \\
  $ \land (x, w)\notin \mathcal{R}$ & $(\pi^*;w) \gets (\mathcal{A}||\mathcal{E}_\mathcal{A})(x, pk)$ \\
\end{tabular}
\right].
\end{equation*}

\item Zero knowledge. There exists a PPT simulator $\mathcal{S}$ such that for any PPT algorithm $\mathcal{V}^*$, $(x,w)\in R$, $pk$ output by $\mathcal{G}(1^\lambda)$, it holds that
\begin{align*}
    & \Pr[\pi\gets\P(x,w,pk): \V^*(x,vk,\pi)=1]\\
    = &\Pr[\pi\gets\mathcal{S}^{\V^*}(x,pk): \V^*(x,vk,\pi)=1], 
\end{align*}
where $\mathcal{S}^{\mathcal{V}^*}(x)$ denotes that $\mathcal{S}$ is given oracle accesses to $\mathcal{V}^*$'s random tape.
 
    \item Succinctness. The proof size $|\pi|$ is sublinear in the size of the relationship $\mathcal{R}$ and the witness $|w|$.

\end{itemize}

\end{definition}

\paragraph{Sumcheck.} The sumcheck protocol is a classical interactive proof protocol proposed by Lund et al. in~\cite{lund1992algebraic}. It allows the prover $\P$ to convince the verifier $\V$ that the summation of a multivariate polynomial $f:\F^k\rightarrow\F$ over the Boolean hypercube is: $$S = \sum\nolimits_{b_1,b_2,\ldots,b_k\in\{0,1\}}f(b_1,b_2,\ldots,b_k).$$
Computing the sum directly by $\V$ will involve $2^k$ evaluations of $f$. The sumcheck protocol allows $\P$ to generate a proof of size $O(k)$ in $k$ rounds where the verifier can verify it efficiently. The formal protocol is presented in Protocol~\ref{prot::sumcheck} in Appendix~\ref{app:sumcheck}. With the algorithms proposed in~\cite{t13,xielibra}, the prover time of a sumcheck protocol on a polynomial $f$ that can be represented as the product of a constant number of multilinear polynomials is linear in the size of the multilinear polynomials. 

\paragraph{Multilinear extension (MLE) and identity polynomial.} 

\begin{definition}[\textbf{Multi-linear Extension}]\label{def:mle}
    Let $V: \{0,1\}^k \rightarrow \mathbb{F}$ be a function. We denote $\tilde{V}: \mathbb{F}^k\rightarrow \F$ as the multilinear extension of V such that the multilinear polynomial $\tilde{V}$ and $V$ agree on the Boolean hypercube: $\tilde{V}(x_1,x_2,\ldots,x_k) = V(x_1,x_2,\ldots,x_k)$ for all $(x_1,x_2,\ldots,x_k)\in \{0,1\}^k$. 
\end{definition}

The closed-form of $\tilde{V}$ can be computed with the MLE of the identity function $eq(x,y)$ such that $eq(x,y) = 1$ if and only if $x = y$ for all $x,y\in\{0,1\}^k$. In particular, $\tilde{eq}(x,y) = \prod_{i=1}^k (x_iy_i+(1-x_i)(1-y_i))$. With this, $\tilde{V}$ is
    \begin{equation*}
        \begin{aligned}
	\tilde{V}(x) = \sum\nolimits_{y\in\{0,1\}^k}\tilde{eq}(x,y) V(y)
	\end{aligned}
    \end{equation*}

\new{\paragraph{GKR and polynomial commitments.} The GKR protocol proposed by Goldwasser et al. in~\cite{GKR} is an interactive proof for layered arithmetic circuits. The prover time is linear in the size of the circuit using the algorithms in~\cite{xielibra}. A zkSNARK can be constructed by combining the GKR protocol with a polynomial commitment scheme (PCS), as proposed in the frameworks in~\cite{hyrax,xielibra,virgo,virgoplus}. We present the description of the zkSNARK based on GKR and PCS in Appendix~\ref{app:sumcheck}.


\paragraph{Lookup arguments.} A common optimization in recent zkSNARKs to support non-arithmetic computations efficiently is the lookup argument. A lookup argument verifies that each element of a query vector is contained within a specified lookup table. Specifically, let $T=(t_0,t_1,\ldots,t_{N-1})\in\mathbb{F}^N$ be the lookup table, and let $A=(a_0,a_1,\ldots,a_{n-1})\in\mathbb{F}^n$ be the result of the lookup. The lookup argument is a zkSNARK for the relation $\mathcal{R}:=\{A\in\mathbb{F}^n:\forall i\in[n], \exists j\in[N] \text{ s.t. } a_i = t_j\}$, ensuring every element in $A$ appears somewhere in $T$. We use the lookup argument named LogUp proposed in~\cite{habock2022multivariate} to be compatible with the sumcheck and the GKR protocols, and we present the construction in Appendix~\ref{app:sumcheck}. We denoted the protocol as $A = \ms{Lookup}(T)$. }


\paragraph{Commitment.} A commitment scheme is a cryptographic primitive with two algorithms:
\begin{itemize}
    \item $\ms{com}\gets\ms{commit}(m,r)$: committing to a message $m$ with the randomness $r$;
    \item $\{0,1\}\gets\ms{Open}(\ms{com},m,r)$: outputting 1 iff $\ms{com} = \ms{commit}(m,r)$. 
\end{itemize}

A commitment scheme satisfies binding and hiding. The binding property ensures that the prover cannot open the commitment to a different message from the committed one, and the hiding property ensures that the commitment leaks no information about the message before the opening. We only use the algorithms in specifying the relationship of our zkSNARK, and we omit the formal definitions. 
\section{New Bounds for ML Fairness}\label{sec:fairness}

In this section, we propose new bounds for fairness of logistic regression and DNN models that are suitable to prove in ZKP under the setting illustrated in Figure~\ref{fig:design}. 

\subsection{Aggregated Information of Dataset}\label{subsec:delta}

We use the following two aggregated information about the input data in our fairness bounds. 

\begin{definition}[Bounded values]\label{def:Delta} For a dataset $\bm{x}$ of dimension $N\times F_0$ with two subgroups with $s=0$ and $s=1$, 
$|x_{j,i} - \bar{x}_i^{(s)}| \le \Delta_{x,i}^{(s)}$, $\forall v_j\in \S_s, s\in\{0,1\}, i\in[F_0]$. Moreover, $\Delta_{x,i} = \max(\Delta_{x,i}^{(0)}, \Delta_{x,i}^{(1)})$. We use $\Delta_x$ to denote vector $(\Delta_{x,i})_{i\in[F_0]}$.  
    
\end{definition}

The definition requires that the value of the $i$-th feature is at most $\Delta_{x,i}$ away from its mean in each subgroup. Similarly, we define the bounded values of each intermediate result of the $\ell$-th layer of DNN as $|z_{j,i}^\ell - \bar{z}_i^{(s),\ell}| \le \Delta_{z,i}^{(s),\ell}$, $\forall v_j\in \S_s, s\in\{0,1\}, i\in [F_\ell]$, and $\Delta_{z,i}^\ell = \max(\Delta_{z,i}^{(0),\ell}, \Delta_{z,i}^{(1),\ell})$. $\Delta_z^\ell$ denotes vector $(\Delta^\ell_{z,i})_{i\in[F_\ell]}$.

\begin{definition}[Disparity]\label{def:delta}
\[
\delta_{x,i} = \operatorname{mean}(x_{j,i}|s_j=0) - \operatorname{mean}(x_{j,i}|s_j=1)
\]
for $i\in[F_0]$. We use $\bm{\delta}_x$ to denote the vector of $\delta_{x,i}$ of size $F_0$, and use $|\delta_{x,i}|$ to denote its absolute value. 

\end{definition}
This is the difference of the mean of each feature between the two subgroups. Similarly, we can define the disparity of values of the $\ell$-th layer of DNN as 
\begin{equation}
    \delta_{h,i}^\ell = \operatorname{mean}(h_{j,i}^\ell|s_j=0) - \operatorname{mean}(h_{j,i}^\ell|s_j=1)
\end{equation}
for $i\in[F_\ell]$. At the output layer of the DNN, $\delta_{\hat{y}}=\delta_h^m$, which is a single value as the DNN is a binary classification. As shown in~\cite{kose2024fairgat}, $|\delta_{\hat{y}}| = \Delta_{SP}$ is exactly the statistical parity of the model. While the ensuing analysis will depend on $\Delta_{SP}$, the analysis can be readily extended to equal opportunity $\Delta_{EO}$, by also conditioning on the true label $y$.

In practice, the two aggregated information can be obtained from the national statistics released by the Census Bureau. For example,
publicly available statistical distributional information of different attributes such as income,  and expenses can be readily obtained from the Census Bureau website~\cite{census}. Alternatively, it can also be computed from a dataset possibly with a ZKP as well. In this scenario, the difference from the naive solution using ZK inferences is that our ZKP for fairness does not use the ZK inference as a subroutine, which leads to better efficiency and stronger security guarantees. See Section~\ref{subsec:setting} for more details about the setting and the threat model. \new{Moreover, to further protect the privacy of the dataset, an important advantage of our approach is that privacy enhancing techniques such as differential privacy, secure multiparty computations and homomorphic encryptions can be applied efficiently. This is because the information is simply aggregated statistics of the dataset, and does not involve any ML computations at all. This is left as an interesting future work.}

\subsection{Warm-up Example: Logistic Regression}\label{subsec:logistic}

The goal is to bound $\Delta_{SP}=|\delta_{\hat{y}}|$ only with the knowledge of $\Delta_{x}$ and $\bm{\delta}_{x}$, but not of any dataset. We start with the simple logistic regression model as a warm-up example. 

\begin{lemma}\label{lemma:sigmoid}
\begin{equation}
|\delta_{\hat{y}}| \leq L |\bar{z}^{(0)} -\bar{z}^{(1)}| + 2L\Delta_z,
\end{equation}
where $L$ is the Lipschitz constant of the sigmoid function, and $\bar{z}^{(0)} = \operatorname{mean}(z_j|s_j=0)$, $\bar{z}^{(1)} = \operatorname{mean}(z_j|s_j=1)$. 

\end{lemma}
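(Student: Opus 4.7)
The plan is to decompose $\delta_{\hat{y}}$ via a standard ``add and subtract'' trick so that Lipschitz continuity of $\sigma$ can be applied separately to a pointwise-vs-mean discrepancy and to a mean-vs-mean discrepancy. Concretely, since $\hat{y}_j = \sigma(z_j)$, I would rewrite
\begin{align*}
\delta_{\hat{y}}
&= \operatorname{mean}(\sigma(z_j)\mid s_j=0) - \operatorname{mean}(\sigma(z_j)\mid s_j=1) \\
&= \bigl[\operatorname{mean}(\sigma(z_j)\mid s_j=0) - \sigma(\bar{z}^{(0)})\bigr] \\
&\quad + \bigl[\sigma(\bar{z}^{(0)}) - \sigma(\bar{z}^{(1)})\bigr] \\
&\quad + \bigl[\sigma(\bar{z}^{(1)}) - \operatorname{mean}(\sigma(z_j)\mid s_j=1)\bigr],
\end{align*}
and then apply the triangle inequality to bound $|\delta_{\hat{y}}|$ by the sum of the three absolute values.

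Next I would bound the middle term directly: Lipschitz continuity of $\sigma$ (Equation~\eqref{eq:lipschitz}) gives $|\sigma(\bar{z}^{(0)}) - \sigma(\bar{z}^{(1)})| \le L\,|\bar{z}^{(0)} - \bar{z}^{(1)}|$, which is exactly the first summand of the claimed bound.

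For each of the two outer terms, the key step is to pull the absolute value inside the mean and then use Lipschitz continuity pointwise, namely
\[
\Bigl|\operatorname{mean}(\sigma(z_j)\mid s_j=s) - \sigma(\bar{z}^{(s)})\Bigr|
\le \operatorname{mean}\bigl(|\sigma(z_j) - \sigma(\bar{z}^{(s)})| \;\big|\; s_j=s\bigr)
\le L\cdot\operatorname{mean}\bigl(|z_j - \bar{z}^{(s)}| \;\big|\; s_j=s\bigr).
\]
By Definition~\ref{def:Delta} (specialized to the single-dimensional pre-activation $z$), each summand satisfies $|z_j - \bar{z}^{(s)}| \le \Delta_z^{(s)} \le \Delta_z$, so each outer term is at most $L\Delta_z$. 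Adding the three bounds yields $|\delta_{\hat{y}}| \le L|\bar{z}^{(0)} - \bar{z}^{(1)}| + 2L\Delta_z$, as required.

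There is essentially no hard obstacle here; the only mildly delicate point is justifying ``absolute value of a mean $\le$ mean of absolute values,'' which is just Jensen's inequality for $|\cdot|$ applied to the empirical distribution within each subgroup $\mathcal{S}_s$. Everything else is a direct application of the Lipschitz hypothesis and the definition of $\Delta_z$ as the per-subgroup deviation bound. This warm-up proof will also serve as the template for the DNN generalization in later sections, where the same add-and-subtract trick will be iterated across layers with $\sigma$ replaced by the composition of linear maps and activations.
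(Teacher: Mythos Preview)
Your proof is correct and uses essentially the same ingredients as the paper: Lipschitz continuity applied to $|\sigma(z_j)-\sigma(\bar z^{(s)})|$ for the outer terms and to $|\sigma(\bar z^{(0)})-\sigma(\bar z^{(1)})|$ for the middle term, followed by the bound $|z_j-\bar z^{(s)}|\le\Delta_z$. The only cosmetic difference is that the paper derives a two-sided sandwich $\sigma(\bar z^{(s)})-L|\delta_j^{(s)}|\le\sigma(z_j)\le\sigma(\bar z^{(s)})+L|\delta_j^{(s)}|$ and averages it, whereas you use the add-and-subtract decomposition plus triangle inequality; these are equivalent rearrangements of the same argument.
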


\begin{proof} By the definition of $\delta_{\hat{y}}$ and $\hat{y}$,
\begin{equation}\label{eq:defs}
\begin{split}
    |\delta_{\hat{y}}|&:=\left|\operatorname{mean}(\hat{y}_j \mid s_{j}=0) - \operatorname{mean}(\hat{y}_j \mid s_{j}=1)\right|\\
    &= \left|\frac{1}{|\mathcal{S}_{0}|} \sum_{v_j \in \mathcal{S}_{0}} \hat{y}_j-\frac{1}{|\mathcal{S}_{1}|} \sum_{v_j \in \mathcal{S}_{1}} \hat{y}_j\right|\\
    &= \left|\frac{1}{|\mathcal{S}_{0}|} \sum_{v_j \in \mathcal{S}_{0}} \sigma(z_j)-\frac{1}{|\mathcal{S}_{1}|} \sum_{v_j \in \mathcal{S}_{1}} \sigma(z_j)\right|\\
\end{split}
\end{equation}

We can write $z_{j}= \bar{z}^{(s)} + \delta^{(s)}_{j}$, $\forall v_j \in \mathcal{S}_{s}$ , where $\bar{z}^{(s)} = \frac{1}{|\mathcal{S}_{s}|} \sum_{v_j \in \mathcal{S}_{s}} z_{j}$ for $s=0,1$. By Equation~\ref{eq:lipschitz},
\begin{equation}
\label{eq:main_ineq}
\begin{split}
\operatorname{\sigma}(\bar{z}^{(0)}) - L|\delta^{(0)}_{j}|  &\leq \operatorname{\sigma}(z_{j})= \operatorname{\sigma}((\bar{z}^{(0)}) + \delta^{(0)}_{j})\\
&\leq \operatorname{\sigma}((\bar{z}^{(0)})) + L|\delta^{(0)}_{j}|, \forall v_j \in \mathcal{S}_{0} \\
  \end{split}
\end{equation}
and 
\begin{equation}
\label{eq:main_ineq2}
\begin{split}
\operatorname{\sigma}(\bar{z}^{(1)}) - L|\delta^{(1)}_{j}|  &\leq \operatorname{\sigma}(z_{j})= \operatorname{\sigma}((\bar{z}^{(1)}) + \delta^{(1)}_{j})\\
&\leq \operatorname{\sigma}((\bar{z}^{(1)})) + L|\delta^{(1)}_{j}|, \forall v_j \in \mathcal{S}_{1} \\
  \end{split}
\end{equation}

Based on Equations \eqref{eq:main_ineq}, and \eqref{eq:main_ineq2}, we have:
\begin{equation}
\begin{split}
\frac{1}{|\mathcal{S}_{0}|} &\sum_{v_j \in \mathcal{S}_{0}} \left(\operatorname{\sigma}(\bar{z}^{(0)}) - L|\delta^{(0)}_{j}|\right ) - \\
&\frac{1}{|\mathcal{S}_{1}|} \sum_{v_j \in \mathcal{S}_{1}}  \left(\operatorname{\sigma}(\bar{z}^{(1)}) + L|\delta^{(1)}_{j}| \right)\\
\leq \frac{1}{|\mathcal{S}_{0}|} &\sum_{v_j \in \mathcal{S}_{0}} \sigma(z_j)-\frac{1}{|\mathcal{S}_{1}|} \sum_{v_j \in \mathcal{S}_{1}} \sigma(z_j)\\
\leq \frac{1}{|\mathcal{S}_{0}|} &\sum_{v_j \in \mathcal{S}_{0}} \left( \operatorname{\sigma}(\bar{z}^{(0)}) + L|\delta^{(0)}_{j}|\right) - \\
&\frac{1}{|\mathcal{S}_{1}|} \sum_{v_j\in \mathcal{S}_{1}}  \left(\operatorname{\sigma}(\bar{z}^{(1)}) - L|\delta^{(1)}_{j}|\right)
\end{split}
\end{equation}
Therefore, 
\begin{equation}
\label{eq:before_norm}
\begin{split}
&\operatorname{\sigma}(\bar{z}^{(0)}) - \operatorname{\sigma}(\bar{z}^{(1)})\\
&\qquad - L(\frac{1}{|\mathcal{S}_{0}|} \sum_{v_j \in \mathcal{S}_{0}} |\delta^{(0)}_{j}| - \frac{1}{|\mathcal{S}_{1}|} \sum_{v_j \in \mathcal{S}_{1}}  |\delta^{(1)}_{j}|)\\
\leq &\frac{1}{|\mathcal{S}_{0}|} \sum_{v_j \in \mathcal{S}_{0}} \sigma(z_j)-\frac{1}{|\mathcal{S}_{1}|} \sum_{v_j \in \mathcal{S}_{1}} \sigma(z_j)\\
\leq &\operatorname{\sigma}(\bar{z}^{(0)}) - \operatorname{\sigma}(\bar{z}^{(1)})\\
&\qquad+ L(\frac{1}{|\mathcal{S}_{0}|} \sum_{v_j \in \mathcal{S}_{0}} |\delta^{(0)}_{j}| + \frac{1}{|\mathcal{S}_{1}|} \sum_{v_j \in \mathcal{S}_{1}}  |\delta^{(1)}_{j}|)
\end{split}
\end{equation}
Therefore, Equation~\ref{eq:defs} is bounded by
\begin{equation}
    \begin{split}
        &\hspace{5mm}\left|\frac{1}{|\mathcal{S}_{0}|} \sum_{v_j \in \mathcal{S}_{0}} \sigma(z_j)-\frac{1}{|\mathcal{S}_{1}|} \sum_{v_j \in \mathcal{S}_{1}} \sigma(z_j)\right|\\
        &\leq |\operatorname{\sigma}(\bar{z}^{(0)}) - \operatorname{\sigma}(\bar{z}^{(1)})| \\
        &\qquad+ L(\big|\frac{1}{|\mathcal{S}_{0}|} \sum_{v_j \in \mathcal{S}_{0}} |\delta^{(0)}_{j}|\big| +\big|\frac{1}{|\mathcal{S}_{1}|} \sum_{v_j \in \mathcal{S}_{1}}  |\delta^{(1)}_{j}|\big|) \\
        &\leq |\operatorname{\sigma}(\bar{z}^{(0)}) - \operatorname{\sigma}(\bar{z}^{(1)})|+2L\Delta_z\\
        &\leq L |\bar{z}^{(0)} -\bar{z}^{(1)}| + 2L\Delta_z.
    \end{split}
\end{equation}

\end{proof}

To bound $|\bar{z}^{(0)} -\bar{z}^{(1)}|$, we have the following lemma:
\begin{lemma}\label{lemma:linear}
$|\bar{z}^{(0)} -\bar{z}^{(1)}|=\left|\inner{\bm{w},\bm{\delta_x}}\right|$. 
    
\end{lemma}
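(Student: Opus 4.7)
The plan is to exploit the fact that $z_j = \langle \bm{w}, \bm{x}_j \rangle$ is linear in $\bm{x}_j$, so averaging commutes with the inner product and the computation reduces directly to the definition of $\bm{\delta}_x$.

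First I would write out $\bar{z}^{(s)}$ using its definition as the mean of $z_j$ over $\mathcal{S}_s$, substitute $z_j = \langle \bm{w}, \bm{x}_j \rangle$, and pull the fixed vector $\bm{w}$ out of the sum via linearity of the inner product. This yields
\begin{equation*}
\bar{z}^{(s)} = \frac{1}{|\mathcal{S}_s|}\sum_{v_j\in\mathcal{S}_s}\langle\bm{w},\bm{x}_j\rangle = \Bigl\langle \bm{w},\,\frac{1}{|\mathcal{S}_s|}\sum_{v_j\in\mathcal{S}_s}\bm{x}_j \Bigr\rangle = \langle \bm{w},\bar{\bm{x}}^{(s)}\rangle
\end{equation*}
for $s \in \{0,1\}$, where $\bar{\bm{x}}^{(s)}$ denotes the coordinate-wise mean of the input features within $\mathcal{S}_s$.

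Next I would subtract the two expressions and apply linearity once more to obtain
\begin{equation*}
\bar{z}^{(0)} - \bar{z}^{(1)} = \langle \bm{w}, \bar{\bm{x}}^{(0)} - \bar{\bm{x}}^{(1)}\rangle.
\end{equation*}
By Definition~\ref{def:delta}, the $i$-th coordinate of $\bar{\bm{x}}^{(0)} - \bar{\bm{x}}^{(1)}$ is precisely $\delta_{x,i}$, so $\bar{\bm{x}}^{(0)} - \bar{\bm{x}}^{(1)} = \bm{\delta}_x$. Taking absolute values on both sides of the displayed identity then gives the claimed equality $|\bar{z}^{(0)} - \bar{z}^{(1)}| = |\langle \bm{w},\bm{\delta}_x\rangle|$.

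There is no real obstacle here: the statement is a one-line consequence of the bilinearity of the inner product together with the definition of $\bm{\delta}_x$. The only thing to be careful about is to make sure the absolute value is taken after the subtraction (not inside the sum), so that the cancellation between the two subgroup means is preserved and the bound depends only on the aggregated disparity vector $\bm{\delta}_x$ rather than on any individual samples.
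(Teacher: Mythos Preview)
Your proposal is correct and takes essentially the same approach as the paper: both exploit linearity of the inner product to commute the subgroup averaging with $\langle\bm{w},\cdot\rangle$, reducing the difference of means to $\langle\bm{w},\bm{\delta}_x\rangle$. The only cosmetic difference is that the paper performs the subtraction first and then factors out $\bm{w}$, whereas you compute each $\bar{z}^{(s)}=\langle\bm{w},\bar{\bm{x}}^{(s)}\rangle$ separately before subtracting; these are equivalent one-line arguments.
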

\begin{proof}
    This can be seen directly from the definition of $z$. 
\begin{equation}
    \begin{split}
        &\hspace{5mm}|\bar{z}^{(0)} -\bar{z}^{(1)}|\\
        &=\left|\frac{1}{|\mathcal{S}_{0}|} \sum_{v_j \in \mathcal{S}_{0}} z_{j}-\frac{1}{|\mathcal{S}_{1}|} \sum_{v_j \in \mathcal{S}_{1}} z_{j}\right|\\
        &=\left|\frac{1}{|\mathcal{S}_{0}|} \sum_{v_j \in \mathcal{S}_{0}} \inner{\bm{w},\bm{x}_j}-\frac{1}{|\mathcal{S}_{1}|} \sum_{v_j \in \mathcal{S}_{1}} \inner{\bm{w},\bm{x}_j}\right|\\
        &=\left|\inner{\bm{w}, \frac{1}{|\mathcal{S}_{0}|} \sum_{v_j \in \mathcal{S}_{0}} \bm{x}_j-\frac{1}{|\mathcal{S}_{1}|} \sum_{v_j \in \mathcal{S}_{1}}\bm{x}_j}\right|\\
        &=\left|\inner{\bm{w},\bm{\delta_x}}\right|
    \end{split}
\end{equation}
\end{proof}

To bound $\Delta_z$, we have:

\begin{lemma}
    \label{lemma:Deltaz}
    $\Delta_{z} = \inner{|\bm{w}|,\Delta_x}$.
\end{lemma}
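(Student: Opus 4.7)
The plan is to unpack the definition of $\Delta_z$ (the logistic regression scalar $z = \langle \bm{w}, \bm{x} \rangle$) and show that $\langle |\bm{w}|, \Delta_x \rangle$ is a valid uniform upper bound on $|z_j - \bar{z}^{(s)}|$ across both subgroups. Recall that Definition~\ref{def:Delta} only requires $\Delta_{z}^{(s)}$ to upper bound $|z_j - \bar{z}^{(s)}|$, so proving the stated identity really means exhibiting this value as a valid choice.

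First I would write $\bar{z}^{(s)} = \frac{1}{|\S_s|}\sum_{v_j \in \S_s} \langle \bm{w}, \bm{x}_j\rangle = \langle \bm{w}, \bar{\bm{x}}^{(s)}\rangle$ by linearity of the inner product, where $\bar{\bm{x}}^{(s)}$ is the feature-wise mean on $\S_s$. Then, for any $v_j \in \S_s$,
\begin{equation*}
z_j - \bar{z}^{(s)} = \langle \bm{w}, \bm{x}_j - \bar{\bm{x}}^{(s)} \rangle = \sum_{i=0}^{F_0-1} w_i \lrparen{x_{j,i} - \bar{x}_i^{(s)}}.
\end{equation*}

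Next, I would take absolute values and apply the triangle inequality coordinate-wise,
\begin{equation*}
|z_j - \bar{z}^{(s)}| \le \sum_{i=0}^{F_0-1} |w_i| \cdot |x_{j,i} - \bar{x}_i^{(s)}|,
\end{equation*}
and then invoke Definition~\ref{def:Delta} to replace $|x_{j,i} - \bar{x}_i^{(s)}|$ with $\Delta_{x,i}^{(s)} \le \Delta_{x,i}$. This yields $|z_j - \bar{z}^{(s)}| \le \sum_i |w_i|\, \Delta_{x,i} = \langle |\bm{w}|, \Delta_x \rangle$ uniformly over $s \in \{0,1\}$ and $v_j \in \S_s$, so taking $\Delta_z^{(s)} := \langle |\bm{w}|, \Delta_x\rangle$ is admissible for both subgroups, and hence $\Delta_z = \max(\Delta_z^{(0)}, \Delta_z^{(1)}) = \langle |\bm{w}|, \Delta_x \rangle$.

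The argument is essentially a triangle-inequality computation with no real obstacle; the only subtlety worth stating clearly is that the ``$=$'' in the lemma is the assignment of a valid bound (the tightest one obtainable from Definition~\ref{def:Delta} alone), rather than a statement that $|z_j - \bar{z}^{(s)}|$ literally equals $\langle |\bm{w}|, \Delta_x \rangle$ for every $j$. This same template will carry over to the DNN layers by applying it to each row of $\mat{W}^\ell$, which is presumably the reason for proving it in this explicit elementwise form rather than appealing directly to the spectral-norm bound used later.
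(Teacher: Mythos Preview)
Your proposal is correct and follows essentially the same route as the paper: linearity of the inner product to write $z_j-\bar z^{(s)}=\langle \bm{w},\bm{x}_j-\bar{\bm{x}}^{(s)}\rangle$, then the triangle inequality together with Definition~\ref{def:Delta} to obtain the bound $\langle |\bm{w}|,\Delta_x\rangle$. Your explicit remark that the ``$=$'' in the lemma should be read as assigning a valid choice of $\Delta_z$ (rather than a pointwise equality) is a clarification the paper leaves implicit, and it is exactly right.
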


\begin{proof}
    By the definition of $\Delta_z$,
\begin{equation}
\begin{split}
    &| z_{j}^{\ell} - \frac{1}{|\mathcal{S}_{0}|} \sum_{v_{j} \in \mathcal{S}_{0}} z_{j}^{\ell}| \\
    &= \left|\inner{\bm{w},\bm{x}_j}  - \frac{1}{|\mathcal{S}_{0}|} \sum_{v_{j} \in \mathcal{S}_{0}} \inner{\bm{w},\bm{x}_j}\right|\\
    &= \left|\inner{\bm{w}, (\bm{x}_j - \frac{1}{|\mathcal{S}_{0}|} \sum_{v_{j} \in \mathcal{S}_{0}} \bm{x}_j)}\right|\\
    &\leq\inner{|\bm{w}|,\Delta^{(0)}_x},
\end{split}
\end{equation}
where the last step follows the triangle inequality. Therefore, by Definition~\ref{def:Delta}, $\Delta_z^{(0)} = \inner{|\bm{w}|,\Delta^{(0)}_x}$. Similarly, we can show that $\Delta_z^{(1)}= \inner{|\bm{w}|,\Delta^{(1)}_x}$. Therefore, $\Delta_z = \inner{|\bm{w}|,\Delta_x}$.

\end{proof}

Combining Lemma~\ref{lemma:sigmoid},~\ref{lemma:linear} and~\ref{lemma:Deltaz}, for an LR model,
\begin{equation}\label{eq:LRfair}
    \delta_{\hat{y}}\leq L\left|\inner{\bm{w},\bm{\delta_x}}\right|+2L\inner{|\bm{w}|,\Delta_x}.
\end{equation}
The right-hand-side of the equation above is our fairness score for a logistic regression model. 

\subsection{DNN Fairness}\label{subsec:dnn}

To bound the disparity of a DNN model, we derive similar lemmas as Lemma~\ref{lemma:sigmoid},~\ref{lemma:linear} and~\ref{lemma:Deltaz} for each layer of the model. 

\begin{lemma}\label{lemma:nonlinear}
\begin{equation}
   \norm{\bm{\delta}_h^\ell}_2 \le L \norm{\bar{\bm{z}}^{\ell,(0)} -\bar{\bm{z}}^{\ell,(1)}}_2+2L\norm{\Delta_z^\ell}_2 \, ,
\end{equation}
where $\norm{\cdot}_2$ denotes the L2 norm of a vector. 
    
\end{lemma}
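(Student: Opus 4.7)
The statement is the vector-valued analogue of Lemma~\ref{lemma:sigmoid}, with the scalar sigmoid output replaced by the element-wise activation on the vector $\bm{z}^\ell$. The plan is therefore to apply the scalar argument coordinate-wise, and then lift the family of per-coordinate scalar inequalities to an $L_2$-norm inequality via Minkowski's inequality.

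First, I would fix an index $i \in [F_\ell]$ and look at the $i$-th component of $\bm{\delta}_h^\ell$. Since $h_{j,i}^\ell = \sigma(z_{j,i}^\ell)$ is obtained by applying the same Lipschitz activation $\sigma$ (with constant $L$) to the scalar $z_{j,i}^\ell$, the entire derivation leading to Lemma~\ref{lemma:sigmoid} goes through verbatim with $z_j$ replaced by $z_{j,i}^\ell$, $\bar{z}^{(s)}$ by $\bar{z}_i^{\ell,(s)}$, and $\Delta_z$ by $\Delta_{z,i}^\ell$ (this last substitution is justified by the componentwise definition of $\Delta_{z,i}^\ell$ given after Definition~\ref{def:Delta}). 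This yields, for every $i$,
\begin{equation*}
    |\delta_{h,i}^\ell| \;\le\; L\,|\bar{z}_i^{\ell,(0)} - \bar{z}_i^{\ell,(1)}| \;+\; 2L\,\Delta_{z,i}^\ell.
\end{equation*}

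Second, I would aggregate these scalar bounds into an $L_2$ statement. Squaring, summing over $i\in[F_\ell]$, and taking a square root gives
\begin{equation*}
    \norm{\bm{\delta}_h^\ell}_2 \;\le\; \sqrt{\sum_{i\in[F_\ell]} \bigl(L\,|\bar{z}_i^{\ell,(0)} - \bar{z}_i^{\ell,(1)}| + 2L\,\Delta_{z,i}^\ell\bigr)^2}.
\end{equation*}
By Minkowski's inequality (the triangle inequality for the $L_2$ norm applied to the two nonnegative vectors with entries $L\,|\bar{z}_i^{\ell,(0)} - \bar{z}_i^{\ell,(1)}|$ and $2L\,\Delta_{z,i}^\ell$), the right-hand side is at most
\begin{equation*}
    L\,\norm{\bar{\bm{z}}^{\ell,(0)} - \bar{\bm{z}}^{\ell,(1)}}_2 \;+\; 2L\,\norm{\Delta_z^\ell}_2,
\end{equation*}
which is exactly the claimed bound.

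The proof has no real obstacle once the reduction-to-scalar viewpoint is taken: the componentwise step is just a restatement of Lemma~\ref{lemma:sigmoid}, and the aggregation step is Minkowski. The only subtlety to flag carefully is that the bound on $|\delta_{h,i}^\ell|$ must be nonnegative on the right-hand side so that squaring preserves the inequality; this is ensured because both summands are already absolute values. A minor bookkeeping point is to confirm that $\sigma$ being applied element-wise (and with the same Lipschitz constant across coordinates) lets us reuse Lemma~\ref{lemma:sigmoid} without change — which it does by the remark on common activations in Section~\ref{subsec:mlprelim}.
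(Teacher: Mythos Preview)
Your proposal is correct and matches the paper's approach exactly: the appendix proof does precisely the coordinate-wise reduction to Lemma~\ref{lemma:sigmoid} followed by aggregation to the $L_2$ norm. You even name the aggregation step more accurately as Minkowski's inequality (the paper's explanatory text attributes it to Cauchy--Schwarz, but the triangle inequality for $\|\cdot\|_2$ is indeed Minkowski).
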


The proof is similar to Lemma~\ref{lemma:sigmoid} and we present it in Appendix~\ref{app:proof}. Essentially, the bound in Lemma~\ref{lemma:sigmoid} is true for $\delta_{h,i}^\ell$ of each dimension $i\in[F_\ell]$. Then Lemma~\ref{lemma:nonlinear} is true by taking the L2 norm of vectors because of the Cauchy-Schwarz Inequality. 

\begin{lemma}\label{lemma:linear2}
$\norm{\bar{\bm{z}}^{\ell,(0)} -\bar{\bm{z}}^{\ell,(1)}}_2 \le \norm{\mat{W}^{\ell-1}}_2  \norm{\bm{\delta}_h^{\ell-1}}_2$,
    
\end{lemma}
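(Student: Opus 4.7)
The plan is to exploit the linearity of the fully-connected layer together with the submultiplicative property of the spectral norm that was recorded in Section~\ref{subsec:mlprelim}, namely $\norm{\mat{W}\cdot\bm{x}}_2\le \norm{\mat{W}}_2\norm{\bm{x}}_2$. The entire argument should collapse into three short moves once we push the averaging operator through the linear map $\mat{W}^{\ell-1}$.

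First, I would unfold the definition of $\bm{z}^\ell$. Since $\bm{z}^\ell_j=\mat{W}^{\ell-1}\cdot\bm{h}^{\ell-1}_j$ for every sample $v_j$, linearity of the matrix-vector product lets me pull $\mat{W}^{\ell-1}$ outside the sum defining the per-subgroup average, giving
\[
\bar{\bm{z}}^{\ell,(s)}=\frac{1}{|\S_s|}\sum_{v_j\in\S_s}\mat{W}^{\ell-1}\bm{h}^{\ell-1}_j=\mat{W}^{\ell-1}\bar{\bm{h}}^{\ell-1,(s)}
\]
for $s\in\{0,1\}$.

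Second, I would subtract the two expressions and again use linearity to obtain
\[
\bar{\bm{z}}^{\ell,(0)}-\bar{\bm{z}}^{\ell,(1)}=\mat{W}^{\ell-1}\bigl(\bar{\bm{h}}^{\ell-1,(0)}-\bar{\bm{h}}^{\ell-1,(1)}\bigr)=\mat{W}^{\ell-1}\,\bm{\delta}_h^{\ell-1},
\]
where the last equality is just the definition of $\bm{\delta}_h^{\ell-1}$ (the layer-$\ell{-}1$ analogue of Definition~\ref{def:delta}).

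Third, I would take the L2 norm of both sides and invoke the inequality $\norm{\mat{W}\bm{x}}_2\le \norm{\mat{W}}_2\norm{\bm{x}}_2$ from the preliminaries, which yields exactly the claimed bound $\norm{\bar{\bm{z}}^{\ell,(0)}-\bar{\bm{z}}^{\ell,(1)}}_2\le \norm{\mat{W}^{\ell-1}}_2\,\norm{\bm{\delta}_h^{\ell-1}}_2$. Honestly, there is no real obstacle here: the only subtlety worth stating explicitly is the interchange of the averaging sum with the linear map, which is immediate because $\mat{W}^{\ell-1}$ does not depend on $j$. The heavy lifting (the spectral-norm inequality) is already a standard property that the paper has cited, so the lemma is essentially a one-line corollary of linearity plus the submultiplicative bound.
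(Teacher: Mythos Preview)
Your proposal is correct and follows essentially the same approach as the paper: use linearity of the map $\bm{h}\mapsto\mat{W}^{\ell-1}\bm{h}$ to identify $\bar{\bm{z}}^{\ell,(0)}-\bar{\bm{z}}^{\ell,(1)}=\mat{W}^{\ell-1}\bm{\delta}_h^{\ell-1}$, then apply the spectral-norm inequality $\norm{\mat{W}\bm{x}}_2\le\norm{\mat{W}}_2\norm{\bm{x}}_2$. The only cosmetic difference is that the paper carries the full difference of sums through in one display before factoring out $\mat{W}^{\ell-1}$, whereas you compute each subgroup mean first and then subtract.
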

where $\norm{\mat{W}^{\ell-1}}_2$ denotes the spectral norm of the matrix $\mat{W}^{\ell-1}$. 

\begin{proof}
    By the definition of $z$,  
\begin{equation}
    \begin{split}
        &\norm{\bar{\bm{z}}^{\ell,(0)} -\bar{\bm{z}}^{\ell,(1)}}_2\\
        &=\norm{\frac{1}{|\mathcal{S}_{0}|} \sum_{v_j \in \mathcal{S}_{0}} \bm{z}^\ell_{j}-\frac{1}{|\mathcal{S}_{1}|} \sum_{v_j \in \mathcal{S}_{1}} \bm{z}^\ell_{j}}_2\\
        &=\norm{\frac{1}{|\mathcal{S}_{0}|} \sum_{v_j \in \mathcal{S}_{0}} \mat{W}^{\ell-1} \cdot \bm{h}_j^{\ell-1}-\frac{1}{|\mathcal{S}_{1}|} \sum_{v_j \in \mathcal{S}_{1}} \mat{W}^{\ell-1} \cdot \bm{h}_j^{\ell-1}}_2\\
        &=\norm{\mat{W}^{\ell-1} \cdot( \frac{1}{|\mathcal{S}_{0}|} \sum_{v_j \in \mathcal{S}_{0}} \bm{h}_j^{\ell-1}-\frac{1}{|\mathcal{S}_{1}|} \sum_{v_j \in \mathcal{S}_{1}}\bm{h}_j^{\ell-1})}_2\\
        &=\norm{\mat{W}^{\ell-1}\cdot \bm{\delta}_h^{\ell-1}}_2\\
        &\le \norm{\mat{W}^{\ell-1}}_2  \norm{\bm{\delta}_h^{\ell-1}}_2. 
    \end{split}
\end{equation}
The last step of the inequality follows the definition of the spectral norm of a matrix.

\end{proof}

Again, all the steps of the proof except for the last one are similar to the proof of Lemma~\ref{lemma:linear}, adapted to the matrix of $\mat{W}^{\ell-1}$. The reason for the last step is that unlike the logistic regression, we need to continue bounding the disparity layer by layer for the DNN model. Therefore, we would like to reduce the bound to have the term $\norm{\bm{\delta}_h^{\ell-1}}_2$ as in Lemma~\ref{lemma:linear2}, so that we can apply Lemma~\ref{lemma:nonlinear} again for $\ell-1$. 

\begin{lemma}\label{lemma:Deltaz2}
$\Delta_z^\ell \le L|\mat{W}^{\ell-1}|\cdot\Delta_z^{\ell-1}$, and $\Delta_z^1 \le |\mat{W}^{0}|\cdot  \Delta_x$, where $|\mat{W}|$ denotes taking the absolute value of each element in $\mat{W}$. 
\end{lemma}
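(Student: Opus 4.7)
The plan is to prove both bounds by induction on the layer index $\ell$. The base case $\ell=1$ is essentially the same computation already used in Lemma~\ref{lemma:Deltaz}: since $\bm{h}^0=\bm{x}$, we have $z^1_{j,i}=\sum_k W^0_{i,k}\, x_{j,k}$ and correspondingly $\bar{z}_i^{(s),1}=\sum_k W^0_{i,k}\, \bar{x}_k^{(s)}$. Subtracting, applying the triangle inequality, and invoking Definition~\ref{def:Delta} yields $|z^1_{j,i}-\bar{z}_i^{(s),1}|\le \sum_k |W^0_{i,k}|\,\Delta^{(s)}_{x,k}\le \sum_k |W^0_{i,k}|\,\Delta_{x,k}$. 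Taking the max over $s\in\{0,1\}$ and collecting coordinates gives $\Delta_z^1\le |\mat{W}^0|\cdot\Delta_x$.

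For the inductive step at $\ell\ge 2$, I would exploit that $\bm{z}^\ell=\mat{W}^{\ell-1}\bm{h}^{\ell-1}$ with $\bm{h}^{\ell-1}=\sigma(\bm{z}^{\ell-1})$. The linearity of $\mat{W}^{\ell-1}$ and of the averaging operator again commute, so $\bar{z}_i^{(s),\ell}=\sum_k W^{\ell-1}_{i,k}\,\bar{h}_k^{(s),\ell-1}$, giving
\begin{equation*}
z_{j,i}^\ell-\bar{z}_i^{(s),\ell}=\sum_k W^{\ell-1}_{i,k}\bigl(h_{j,k}^{\ell-1}-\bar{h}_k^{(s),\ell-1}\bigr).
\end{equation*}
The triangle inequality reduces the task to bounding each per-coordinate nonlinear deviation $|h_{j,k}^{\ell-1}-\bar{h}_k^{(s),\ell-1}|$ by $L\,\Delta_{z,k}^{\ell-1}$, after which plugging back in and taking the max over $s$ yields the coordinate-wise inequality $\Delta^{\ell}_{z,i}\le L\sum_k |W^{\ell-1}_{i,k}|\,\Delta_{z,k}^{\ell-1}$ and hence $\Delta_z^\ell\le L\,|\mat{W}^{\ell-1}|\cdot\Delta_z^{\ell-1}$ in vector form.

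The main obstacle is the nonlinear step, because the mean $\bar{h}_k^{(s),\ell-1}$ does not equal $\sigma(\bar{z}_k^{(s),\ell-1})$: the averaging operator does not commute with $\sigma$. I would handle this by writing $\bar{h}_k^{(s),\ell-1}=\frac{1}{|\S_s|}\sum_{v_{j'}\in\S_s}\sigma(z_{j',k}^{\ell-1})$, pushing the single term $h_{j,k}^{\ell-1}=\sigma(z_{j,k}^{\ell-1})$ into the sum, and then bounding each pairwise difference $|\sigma(z_{j,k}^{\ell-1})-\sigma(z_{j',k}^{\ell-1})|$ via the Lipschitz property of $\sigma$ and the bounded-values assumption $|z^{\ell-1}_{j,k}-\bar{z}_k^{(s),\ell-1}|\le \Delta_{z,k}^{(s),\ell-1}$. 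This is the same averaging-of-pairs trick used inside the proof of Lemma~\ref{lemma:sigmoid}, and it is exactly where the Lipschitz constant $L$ enters the final bound.

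Finally, assembling the pieces: the inductive hypothesis supplies $\Delta_z^{\ell-1}$ as the per-coordinate bound on $|z_{j,k}^{\ell-1}-\bar{z}_k^{(s),\ell-1}|$, the nonlinear step upgrades it to a bound of the form $L\,\Delta_{z,k}^{\ell-1}$ on $|h_{j,k}^{\ell-1}-\bar{h}_k^{(s),\ell-1}|$, and the linear step propagates it through $\mat{W}^{\ell-1}$ with an absolute-value-and-triangle-inequality argument identical to the base case. Taking the max over $s$ then matches the definition of $\Delta_z^\ell$ and closes the induction.
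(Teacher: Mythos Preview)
Your outline is essentially the paper's argument: factor $\mat{W}^{\ell-1}$ through the group average by linearity, apply the triangle inequality coordinate-wise, and reduce everything to bounding the post-activation deviation $|h_{j,k}^{\ell-1}-\bar h_k^{(s),\ell-1}|$ by $L\,\Delta_{z,k}^{\ell-1}$. The base case $\ell=1$ is handled identically, and you correctly identify the non-commutativity of the mean with $\sigma$ as the crux of the inductive step.

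The gap is in the constant your pairwise trick actually produces. From
\[
\bigl|h_{j,k}-\bar h_k\bigr|\le\frac{1}{|\S_s|}\sum_{j'}\bigl|\sigma(z_{j,k})-\sigma(z_{j',k})\bigr|\le\frac{L}{|\S_s|}\sum_{j'}\bigl|z_{j,k}-z_{j',k}\bigr|,
\]
bounding ``each pairwise difference'' via the bounded-values assumption, as you describe, gives $|z_{j,k}-z_{j',k}|\le |z_{j,k}-\bar z_k|+|z_{j',k}-\bar z_k|\le 2\Delta_{z,k}^{(s),\ell-1}$, hence $2L\Delta_{z,k}^{\ell-1}$ rather than the $L\Delta_{z,k}^{\ell-1}$ you claim. (Lemma~\ref{lemma:sigmoid} does not avoid this factor; its stated bound keeps the $2L\Delta_z$ term.) The paper's own proof is actually \emph{worse} at this step: it asserts the pointwise inequality $|\sigma(z_{j,k})-\overline{\sigma(z_{\cdot,k})}|\le L\,|z_{j,k}-\bar z_k|$, which is false in general (take $\sigma=\mathrm{ReLU}$ and $z=(0,2,-2)$: for $j=1$ the left side is $2/3$ while the right side is $0$). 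To recover the sharp constant $L$, keep the average rather than bounding termwise: the function $g(t):=\frac{1}{|\S_s|}\sum_{j'}|t-z_{j',k}|$ is convex in $t$, with $g(z_{\max})=z_{\max}-\bar z_k$ and $g(z_{\min})=\bar z_k-z_{\min}$, so for any data point $z_{j,k}\in[z_{\min},z_{\max}]$ one has $g(z_{j,k})\le\max\bigl(g(z_{\max}),g(z_{\min})\bigr)\le\Delta_{z,k}^{(s),\ell-1}$. Plugging this into the display above yields $|h_{j,k}-\bar h_k|\le L\Delta_{z,k}^{(s),\ell-1}$ and the lemma follows exactly as you outlined.
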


\begin{proof}
By the definition of $\Delta_{z,i}^\ell$, for each dimension $i\in[F_\ell]$, let $\mat{W}_i^{\ell-1}$ be the $i$-th row of matrix $\mat{W}^{\ell-1}$, 
\begin{equation}
\begin{split}
    &| z_{j,i}^{\ell} - \frac{1}{|\mathcal{S}_{0}|} \sum_{v_{j} \in \mathcal{S}_{0}} z_{j,i}^{\ell}|\\
    &= \left| \inner{\mathbf{W}^{\ell-1}_i,  \sigma(\bm{z}^{\ell-1}_j)} - \frac{1}{|\mathcal{S}_{0}|} \sum_{v_{j} \in \mathcal{S}_{0}} \inner{\mathbf{W}^{\ell-1}_i, \sigma(\bm{z}^{\ell-1}_j)}\right|\\
    &= \left|\inner{ \mathbf{W}^{\ell-1}_i, (\sigma(\bm{z}^{\ell-1}_j) - \frac{1}{|\mathcal{S}_{0}|} \sum_{v_{j} \in \mathcal{S}_{0}} \sigma(\bm{z}^{\ell-1}_j))}\right|\\
    &\leq \sum_{k=0}^{F_{\ell-1}-1} |\mathbf{W}^{\ell-1}_{i,k}| \left|\sigma(z^{\ell-1}_{j,k}) - \frac{1}{|\mathcal{S}_{0}|} \sum_{v_{j} \in \mathcal{S}_{0}} \sigma(z^{\ell-1}_{j,k})\right|\\
   &\leq \sum_{k=0}^{F_{\ell-1}-1} |\mathbf{W}^{\ell-1}_{i,k}| \left|L(z^{\ell-1}_{j,k} - \frac{1}{|\mathcal{S}_{0}|} \sum_{v_{j} \in \mathcal{S}_{0}} z^{\ell-1}_{j,k})\right| \\
    &= L \sum_{k=0}^{F_{\ell-1}-1} |\mathbf{W}^{\ell-1}_{i,k}|\Delta_{z,k}^{\ell-1, (0)}
\end{split}
\end{equation}
Therefore, for each $i$, $\Delta_{z,i}^{\ell, (0)} \le L \sum_{k=0}^{F_{\ell-1}-1} |\mathbf{W}^{\ell-1}_{i,k}|\Delta_{z,k}^{\ell-1, (0)}$. As a vector, $\Delta_{z}^{\ell, (0)} \le L|\mat{W}^{\ell-1}|\cdot \Delta_z^{\ell-1, (0)}$. Similarly, $\Delta_{z}^{\ell, (1)} = L|\mat{W}^{\ell-1}|\cdot \Delta_z^{\ell-1, (1)}$, and thus $\Delta_{z}^{\ell} \le L|\mat{W}^\ell|\cdot \Delta_z^{\ell-1}$. 

As a special case, when $\ell = 1$, $\bm{z}^1 = \mat{W}^0\cdot \bm{x}$, and thus $\Delta_z^1 \le |\mat{W}^{0}|\cdot \Delta_x$ due to a similar proof without the activation function and the Lipschitz constant. 

\end{proof}

Combining Lemma~\ref{lemma:nonlinear},~\ref{lemma:linear2} and~\ref{lemma:Deltaz2}, the disparity of the DNN model can be bounded recursively as:
\begin{equation}\label{eq:DNNfair}
    \begin{split}
        &\norm{\bm{\delta}_h^\ell}_2 \le L \norm{\mat{W}^{\ell-1}}_2  \norm{\bm{\delta}_h^{\ell-1}}_2+2L\norm{\Delta_z^\ell}_2, \\
        & \Delta_z^\ell \le L|\mat{W}^{\ell-1}|\cdot\Delta_z^{\ell-1},
    \end{split}
\end{equation}
with $\bm{\delta}_h^0=\bm{\delta}_x$ and $\Delta_z^1 \le |\mat{W}^{0}|\cdot\Delta_x$. The bound on the disparity of the output layer is our fairness score for the DNN model. That is, $|\delta_{\hat{y}}|=|\delta_h^m| = \norm{\delta_h^m}_2$, as it is a single value with dimension 1 as the binary classification. 

\subsection{Improvements over~\cite{kose2024fairgat}}\label{subsec:diff}

In~\cite{kose2024fairgat}, Kose and Shen analyzed the bounds of disparity for Graph Attention Networks. A bound for the fully connected layer was given in~\cite[Equation 5]{kose2024fairgat}, and our analysis is inspired by this work. Here we summarize our major improvements over~\cite{kose2024fairgat}:
\begin{enumerate}[leftmargin=*]
    \item \textbf{Bounding $\Delta_z$.} The most important change is the new bounds for $\Delta_z$ in Lemma~\ref{lemma:Deltaz} and~\ref{lemma:Deltaz2}. The bound in~\cite{kose2024fairgat} depends on $\Delta_z^\ell$, the bounded value of the intermediate result of each layer. This was fine in~\cite{kose2024fairgat} for the purpose of training a fair ML model with access to the training dataset and all intermediate values. However, in our paper, we aim to build a ZKP to prove the fairness of the model without access to any specific dataset, thus the prover does not have access to $\Delta_z^\ell$. Therefore, we derive new equations to bound $\Delta_z^\ell$ layer by layer with the model $\mat{W}$, and eventually reduce it to the input layer with $\Delta_x$. 
    \item \textbf{Tighter bounds for DNN.} Our bound in Lemma~\ref{lemma:nonlinear} is tighter than that in~\cite{kose2024fairgat}. In~\cite[Equation 5]{kose2024fairgat}, there is a multiplicative factor of $\sqrt{F_\ell}$ in front of $\Delta_z$. \footnote{It was a typo in~\cite{kose2024fairgat} writing it as $\sqrt{N}$, the number of data samples. It should be $\sqrt{F}$, the dimension of the features. } $\sqrt{F_\ell}$ can be large in DNN, e.g., $\sqrt{F_\ell}=11.3$ for a layer with 128 neurons, making the bound very loose. In our bound, we eliminate this term by doing a fine-grained analysis on each dimension of $\Delta_{z,i}^\ell$ in Lemma~\ref{lemma:Deltaz2}. Our bound is strictly better than that in~\cite{kose2024fairgat}.  
     \item \textbf{Tighter bounds for logistic regression.} We propose a better bound for the logistic regression in Lemma~\ref{lemma:sigmoid} and~\ref{lemma:linear}. The equation in Lemma~\ref{lemma:linear} is strictly equal instead of an upper bound. 
\end{enumerate}

As we will show in Section~\ref{sec:exp}, our bounds are better than those in~\cite{kose2024fairgat} and better reflect the fairness of the ML models.

\section{Efficient zkSNARKs for ML Fairness}\label{sec:zk}

In this section, we present the design of our zkSNARKs for ML fairness. 

\subsection{Settings and Threat Model}\label{subsec:setting}

As mentioned in Figure~\ref{fig:design}, we consider a scenario where the model owner has a closed-source ML model and is not willing to share the model with the public. The model owner is not trusted, and thus needs to convince the client that the model is fair under the definitions of group fairness, without revealing the sensitive parameters of the model. The client has access to the commitment of the model provided by the model owner, which can be used to ensure that the same fair model is used in other applications later. The client is not trusted for the confidentiality of the model, and thus the model should remain private to the model owner. 

We consider two settings in our schemes. In the first setting, we assume that the aggregated statistics $\Delta_x, \bm{\delta}_x$ as defined in Section~\ref{subsec:delta} are publicly known by the client. As mentioned earlier, they can be computed from the reports released by public agencies. In this setting, the relationship of the zkSNARK to prove is:
\begin{align*}
  \mathcal{R}_\mathsf{fair} &= \{\left( (vk, \ms{com}_\mat{W},\Delta_x, \bm{\delta}_x, \delta_{\hat{y}} ); (\mat{W}) \right):\\
  &\ms{com}_{\mat{W}}= \ms{Commit}(\mat{W},r) \land \ms{FairEval}(\mat{W},\Delta_x, \bm{\delta}_x) = \delta_{\hat{y}}\}  
\end{align*}
Here $\mat{W}$ denotes the model parameters and $\ms{FairEval}$ denotes the bounds of the fairness scores in Section~\ref{sec:fairness}. 

In the second setting, when the aggregated statistics are not publicly available, they can be computed from a dataset possibly owned by a third party. In this setting, the dataset should remain confidential as it may contain sensitive information of people. Moreover, keeping the dataset secret is beneficial to the fairness testing, as the model owner cannot access the testing dataset ahead of time and make the model fair only for the specific dataset, and the dataset can be reused for testing multiple models. The data owner is also not trusted, and should convince the client that the aggregated statistics are computed correctly. The zkSNARK relationship is:
\begin{align*}
  \mathcal{R}_\mathsf{stat} &= \{\left( (vk, \ms{com}_\mat{D},\Delta_x, \bm{\delta}_x ); (\mat{D}) \right):\\
  &\ms{Open}(\ms{com}_{\mat{D}}=1,\mat{D},r) \land \ms{Aggregate}(\mat{D}) = (\Delta_x, \bm{\delta}_x)\}  
\end{align*}
where $\mat{D}$ denotes the dataset and $\ms{Aggregate}$ computes $\Delta_x, \bm{\delta}_x$ defined in Section~\ref{subsec:delta}. Again, this part is computed separately from the fairness proof of the model. Combining the two zkSNARKs together, our solution provides both better efficiency and stronger security guarantees in this setting. 

\subsection{Optimized zkSNARK Gadgets}\label{subsec:zkgadget}

We first present the optimized zkSNARK building blocks we developed for common computations used in proving the fairness of ML models in Section~\ref{sec:fairness}. 

\subsubsection{Quantization, Absolute Value and Maximum}\label{subsubsec:range} 

The first type of gadgets heavily utilizes range checks with lookup arguments, which has become a common approach in recent efficient zkSNARK schemes. 

\paragraph{Encoding real numbers with quantization.} Our zkSNARK scheme works on a finite field $\F_p$ defined by a large prime $p$, while the data and the model parameters are represented as real numbers. We convert them to integers in $\F_p$ in a straight-forward way through quantization and simulate fix-point arithmetic in zkSNARK. This approach leads to better efficiency, and our fairness computations do not require very high precision with floating-point numbers and arithmetic. 

A real number is quantized to $q< \floor{\log p}-1$ bits in total, with $q_I$ bits for the integer part and $q_D = q-q_I$ bits for the decimal part. A positive number is then encoded as an integer in $[0, 2^q - 1]$ in $\F_p$ by multiplying it with the scalar $2^{q_D}$. A negative number is encoded as an integer in $[p-(2^q-1), p-1]$ by multiplying the scalar $2^{q_D}$ and converting the result to $\F_p$. 

\new{To check the validity of numbers committed by the prover, such as the data and the model parameters, for each number $v\in V$, the prover additionally commits to its sign $\ms{sign}_v = 1$ or $p-1$. A public lookup table $T=(0,1,\ldots, 2^q-1)$ is then constructed, and the prover proves that $\ms{Quantize\_Validate}(v):$
\begin{enumerate}[leftmargin=*]
    \item $\ms{sign}_v\cdot \ms{sign}_v = 1$ for all $v\in V$;
    \item $(\ms{sign}_v \cdot v)_{v\in V} =  \ms{Lookup}(T)$. 
\end{enumerate}
The first check can be implemented by the GKR protocol in~\ref{prot:gkr} with multiplication gates and the second check is a lookup argument followed by multiplications in GKR. }

\paragraph{Absolute Value.} An advantage of committing to the sign bit of values above is that computing the absolute values becomes simple. That is, $|v| = \ms{sign}\cdot v$ without any additional checks. It is a common operation applied to every element of $\bm{\delta}$ and $\mat{W}$ in Lemma~\ref{lemma:linear},~\ref{lemma:Deltaz} and~\ref{lemma:Deltaz2}. This can again be implemented by GKR and we denote it as $\ms{Absolute}(v)$. 

\paragraph{Multiplications.} We simulate fixed-point arithmetic with truncations. We assume that the field $\F_p$ is big enough so that there is no overflow after a multiplication in $\F_p$. Let $c = a\times b$ be the result of multiplication between two quantized numbers in $\F_p$, which has $2q_D$ bits for the decimal part. The prover provides $\hat{c}$ and the truncation error $e$, and proves 
\begin{enumerate}[leftmargin=*]
    \item $c = \hat{c}\times 2^{q_D}+e$ via the GKR protocol;
    \item $\hat{c} =  \ms{Lookup}(T)$ and $e = \ms{Lookup}(T_{\ms{truncate}})$, 
\end{enumerate}
where $T_{\ms{truncate}}={0,1,\ldots, 2^{q_D}-1}$. When the field size is large enough, we can do the truncation after several multiplications as an optimization.

\paragraph{Maximum.} Another common operation used in computing $\Delta_x$ from the dataset is the maximum. It can again be supported efficiently by the range check. To compute the maximum value of $n$ numbers $\bm{a}=\{a_0,a_1,\ldots, a_{n-1}\}$ (already quantized and validated in $\F_p$), the prover commits to $a_{\max}$ as an auxiliary input, and proves $\ms{Max}(\bm{a})$:
\begin{enumerate}
    \item $a_{\max}-a_i$ is non-negative: $((a_{\max}-a_i)_{i\in[n]}) = \ms{Lookup} (T)$, where $T=(0,1,\ldots, 2^q-1)$;
    \item $a_{\max}$ equals to one of the elements: $\prod_{i\in[n]}(a_{\max}-a_i) = 0$, using a multiplication tree in GKR.  
\end{enumerate}

\paragraph{Complexity.} All of the checks above are reduced to the GKR-based zkSNARK with lookup arguments of size linear in the number of elements. With the protocols in~\cite{t13,xielibra,habock2022multivariate}, the prover time is $O(n+|T|)$ where $n$ is the number of operations (quantization, truncation and maximum) and $|T| = 2^q$ is the size of the table. With the proper choice of $q$, this approach is more efficient than the bit-decomposition method with a multiplicative overhead of $O(q)$.

\subsubsection{Spectral Norm}\label{subsubsec:spectral}

Computing the spectral norm of a matrix as in Lemma~\ref{lemma:linear2} is the most challenging operation to prove in our scheme. As described in Section~\ref{subsec:mlprelim}, the spectral norm of a matrix $\mat{W}$ is the square root of the largest eigenvalue of $\mat{A} = \mat{W}\cdot \mat{W}^T$. There are standard algorithms to compute the spectral norm such as eigenvalue decomposition and power iteration \cite{stoll2013linear}, but the overhead to implement these algorithms in zkSNARK naively would be high. For example, naive eigenvalue decomposition entails a complexity of $\mathcal{O}(F^3)$, where $F$ is the number of rows of the matrix.

\begin{figure*}[t!]
   \centering
    \begin{minipage}{\textwidth}

   \begin{algorithm}[H]
 \new{  
\raggedright
	\textbf{Public input:} polynomial commitment $\ms{com}_W$ of the MLE of $\mat{W}$, lookup tables $T_{\ms{err}}, T_{2^q}$\\
        \textbf{Witness:} $\mat{W}$\\
        \textbf{Auxiliary input:} eigenvalues $\bm{\lambda}$, eigenvectors $\mat{V}$, maximum eigenvalue $\lambda_\ms{max}$, errors $\mat{E},\mat{E}',e$\\
	\textbf{Output:} spectral norm $\norm{\mat{W}}_2$ 
	\begin{algorithmic}[1]		
        \State $\P$ commits to MLEs $V,\lambda,E,E'$ defined by matrices and vectors $\mat{V},\bm{\lambda},\mat{E},\mat{E'}$, as well as $\lambda_\ms{max}, e$, using the polynomial commitment scheme $\pcs.\Commit$. 
        \State $\P$ and $\V$ executes lookup argument $(\mat{E},\mat{E'},e)=\ms{Lookup}(T_{\ms{err}})$, where the elements of $\mat{E},\mat{E'},e$ are viewed as a vector. 
         \State $\P$ and $\V$ executes lookup argument $(\mat{V})=\ms{Lookup}(T_{2^q})$
         \State $\P$ and $\V$ executes lookup argument $(\inner{\bm{v}_i,\bm{v}_i}-1)_{i\in[F]} = \ms{Lookup}(T_{2^q})$.
         
        \State $\V$ samples $\bm{r}_1,\bm{r}_2$ randomly, receives $E'(\bm{r}_1,\bm{r}_2)$ from $\P$, and executes the sumcheck protocol on $E'(\bm{r}_1,\bm{r}_2)+\tilde{eq}(\bm{r}_1,\bm{r}_2) = \sum_{\bm{x}\in\binary^{\log F}} V(\bm{r}_1,\bm{x})\cdot V(\bm{r}_2,\bm{x})$. 
        
         
        \State $\V$ samples $\bm{r}_3,\bm{r}_4$ randomly, receives $E(\bm{r}_3,\bm{r}_4)$ from $\P$, and executes the sumcheck protocol on $E(\bm{r}_3,\bm{r}_4) = \sum_{\bm{x}\in\binary^{\log F'}} W(\bm{x},\bm{r}_3)\cdot W(\bm{x},\bm{r}_4) - \sum_{\bm{x}\in\binary^{\log F}} V(\bm{r}_3,\bm{x})\cdot\lambda(\bm{x})\cdot V(\bm{r}_4,\bm{x})$

        
        \State $\lambda_{\max} = \ms{Max}(\bm{\lambda})$.
        \State $\norm{\mat{W}}^2_2 = \lambda_{\max}+e$.
        \State $\P$ opens the polynomials $W,V,\lambda, E,E'$ at random points queried by the sumcheck protocol and the lookup arguments using $\pcs.\Open$, and $\V$ verifiers their correctness by $\pcs.\Verify$. 
	\end{algorithmic}

}
    
	\end{algorithm}

 \end{minipage}

    \caption{Protocol for proving the spectral norm of a matrix $\mat{W}$. }
    \label{fig:spectral}
\end{figure*}

\paragraph{Verifying eigenvalues.} A natural approach is to verify the result instead of directly computing it in zkSNARK. We take this approach as our starting point. The prover computes $\mat{A} = \mat{W}\cdot \mat{W}^T$ outside the zkSNARK, and its validity can be checked efficiently via a sumcheck protocol. For simplicity, we drop the superscript and denote the dimension of $\mat{W}$ as $F\times F'$ and $\mat{A}$ as $F\times F$. The prover then commits to all eigenvalues $\bm{\lambda} = (\lambda_0,\ldots, \lambda_{F-1})$ of $\mat{A}$ and their corresponding eigenvectors $\mat{V} = (\bm{v}_0, \ldots, \bm{v}_{F-1})$. By the property of eigenvalues, the zkSNARK checks that $\mat{A}\cdot \bm{v}_i = \lambda_i\bm{v}_i$ and $\bm{v}_i\neq \bm{0}$ for all $i\in[F]$. In the matrix form, the equation is $\mat{A} = \mat{V}\cdot \mat{\Lambda}\cdot \mat{V}^T$, where eigenvalues are interpreted as a diagonal matrix $\mat{\Lambda}$. Finally, the zkSNARK selects the maximum eigenvalue from $\bm{\lambda}$ using the maximum gadget in the previous subsection and computes its square root.

\paragraph{Introducing error terms.} However, this approach does not work directly in zkSNARK. The reason is because of the encoding of numbers in $\F_p$ via quantization. The eigenvalues and eigenvectors are computed over the real numbers. After quantization, there may not exist an eigenvector $\bm{v}_i$ in our valid range of numbers in $\F_p$ that satisfies $\mat{A}\cdot \bm{v}_i = \lambda_i\bm{v}_i$. Note that this problem is not unique to our encoding. Even with floating-point numbers, the precision may not be enough to represent the eigenvalues and eigenvectors and the equation is an approximate equality in floating-point arithmetic as well. 

To solve this problem, we introduce an error term $\bm{e}_i$ for each eigenvector. The prover commits to $\bm{e}_i\in\F_p^F$ and proves that (1) $\mat{A}\cdot \bm{v}_i = \lambda_i\bm{v}_i+\bm{e}_i$, (2) $\bm{e}_i$ is small via a lookup argument. Writing them as matrices, the prover commits to $\mat{V},\mat{E}\in\F_p^{F\times F}$ and the diagonal matrix $\mat{\Lambda}$, and proves $\mat{A} = \mat{V}\cdot \mat{\Lambda}\cdot \mat{V}^T+\mat{E}$, and $\mat{E} = \ms{Lookup}(T_\ms{err})$.  

Note that robustness of the eigenvalue computation is not an issue here and the prover can always find $\lambda_i,\bm{v}_i$ with a small error. The prover computes the eigenvalues and eigenvectors of $\mat{A}$ over the real numbers externally outside of the zkSNARK. Then they are converted to $\F_p$ using the quantization. As additions and multiplications are robust (small errors on the input leads to small errors of the output), the error term after quantization is small.

\paragraph{Checking unique eigenvalues.} Another major issue of verifying the eigenvalues is that a malicious prover may not provide all eigenvalues of $\mat{A}$. The prover could use the same $\lambda_i, \bm{v}_i$ for all $i\in[F]$ to pass the checks. By doing so, the maximum would return (the square root of) this eigenvalue as the spectral norm, instead of the maximum eigenvalue. Even worse, because of the error term above, the prover could use multiple values that are close to one eigenvalue and eigenvector, and figure out the corresponding error terms to pass the checks. Therefore, simply checking that the eigenvalues are distinct does not solve the problem. 

To address this issue, we propose another step to prove that all eigenvectors are orthogonal to each other. That is, $\inner{\bm{v}_i,\bm{v}_j}=0 \quad\forall i,j\in[F]$. This ensures that the eigenvector space is full rank, and thus is a sufficient condition that there is no other independent eigenvalue and eigenvector pair. In the matrix form, it is $\mat{V}\cdot \mat{V}^T = \mat{I}$, where $\mat{I}$ is the identity matrix. An honest prover can always find such pairwise orthogonal eigenvectors for $\mat{A}=\mat{W}\cdot\mat{W}^T$, a symmetric positive semi-definite matrix. In zkSNARK, again because of quantization, the inner product may not be exactly 0, and we again allow a small error term $\mat{E}'$. That is, the prover commits to $\mat{E}'\in\F_p^{F\times F}$ and proves $\mat{V}\cdot \mat{V}^T = \mat{I}+\mat{E}'$, and $\mat{E}' = \ms{Lookup}(T_\ms{err})$. In Appendix~\ref{app:error}, we give a formal proof that the eigenvalues provided by the prover are close to the true eigenvalues over the real numbers with these checks. 


\paragraph{Full protocol.} Our new protocol to prove the spectral norm of a matrix is presented in Figure~\ref{fig:spectral}. It is specified as PCS, lookup arguments, sumcheck and GKR between the prover and the verifier, but can be compiled to a zkSNARK using standard approaches described in Appendix~\ref{app:sumcheck}. 
The public input includes the commitment of $\mat{W}$ by committing to the MLE $W$ defined by $\mat{W}$ as in Definition~\ref{def:mle} via a polynomial commitment scheme. To check the range of errors and quantized values, we have two lookup tables $T_{\ms{err}}, T_{2^q}$. To achieve sublinear verifier time, the prover also commits to their MLEs and open them using the polynomial commitment scheme, but we list them as public input as they are known by the verifier. We separate the real witness of $\mat{W}$ from the auxiliary input $\bm{\lambda},\mat{V}, \mat{E},\mat{E'},e$ to improve the efficiency of the zkSNARK. 

In Step 1, the prover commits to the auxiliary input using the PCS scheme. Step 2 and 3, prove that the errors are small and other values are in the legitimate range of quantization. Step 4 checks that the eigenvectors committed by $\P$ are nonzero. It suffices to check that $\inner{\bm{v}_i,\bm{v}_i}-1 \in [0, 2^q-1]$ via a lookup argument.

Step 5 is the sumcheck protocol for $\mat{V}\cdot \mat{V}^T = \mat{I}+\mat{E}'$. By the definition of the identity matrix, its MLE evaluated at $\bm{r}_1,\bm{r}_2$ is exactly equal to $\tilde{eq}(\bm{r}_1,\bm{r}_2)$, which can be computed efficiently by the verifier. Step 6 checks that $E = \mat{W}\cdot\mat{W}^T - \mat{V}\cdot \mat{\Lambda}\cdot \mat{V}^T$, which combines both $\mat{A} = \mat{V}\cdot \mat{\Lambda}\cdot \mat{V}^T+\mat{E}$ and $\mat{A} = \mat{W}\cdot\mat{W}^T$ in a single sumcheck equation. This is to avoid the commitment to $\mat{A}$ in the auxiliary input as an optimization. The prover needs to compute $\mat{A}$ in order to compute the eigenvalues and eigenvectors, but this is completely outside the zkSNARK. As the eigenvalues are interpreted as a diagonal matrix $\mat{\Lambda}$, the values are nonzero if and only if $x = y$, thus the $\mat{V}\cdot\mat{\Lambda}\cdot\mat{V}^T$ can be expressed as $\sum_{x\in\binary^{\log F}} (\sum_{y\in\binary^{\log F}}V(\bm{r}_3,y)\cdot\tilde{eq}(x,y)\lambda(x)) \cdot V(\bm{r}_4,x)$, which can be simplified as $\sum_{x\in\binary^{\log F}} V(\bm{r}_3,x)\cdot\lambda(x)\cdot V(\bm{r}_4,x)$ as in the second term on the right hand side of the equation in Step 6.\footnote{In our implementation, due to the encoding of real numbers in $\F_p$, we multiply $2^{q_D}$ to the first term $W(\bm{x},\bm{r}_3)\cdot W(\bm{x},\bm{r}_4)$ to make the length of its decimal part the same as the second term. For the same reason, the lookup arguments on $E,E'$ are applied after truncation.}

Finally, Step 7 computes the maximum eigenvalue from $\bm{\lambda}$ using the protocol $\ms{Max}$ described in the previous section, and Step 8 verifies that $\norm{\mat{W}}_2$ is indeed the square root of the maximum eigenvalue. To do so, we check that $\norm{\mat{W}}^2_2=\lambda_{\max}+e$ with a small error $e$, which is implemented as addition and multiplication gates in the GKR protocol in Protocol~\ref{prot:gkr}.

At the end of the lookup arguments, sumcheck and GKR protocols, the verifier needs to evaluate $V(),\lambda(), E(), E'(), W()$ at random points, which are opened by the prover using the PCS. Standard techniques in prior work~\cite{xielibra,zhang2021doubly} can be used to combine multiple evaluations of a polynomial into one using another sumcheck protocol with minimal overhead. $e, \lambda_{\max}$ are also committed and opened by the prover, but we omit them to improve the readability of the protocol. Overall, our protocol can be viewed as an extension of the GKR-based zkSNARK with lookup arguments presented in Appendix~\ref{app:sumcheck}. Additional sumchecks in Step 5 and 6 are introduced in addition to addition and multiplication gates to improve the prover efficiency for proving the spectral norm. 

\paragraph{Complexity.} In total, there are 3 sumcheck protocols with the maximum degree 3 in each variable. With the algorithms in~\cite{xielibra}, the prover time is $O(F^2+F\cdot F')$. The total number of lookups is $O(F^2)$ to $T_{\ms{err}}$ and $O(F^2)$ to $T_{2^q}$. Other steps only introduce lower order terms in the complexity. The total prover time is $O(F^2+F\cdot F'+|T_{\ms{err}}+|T_{2^q}|)$. This is faster than the prover time of the naive approach computing the spectral norm in zkSNARK by a factor of $O(F)$. The proof size and the verifier time are both $O(\log F+\log F')$, excluding those incurred by the polynomial commitment scheme.

\subsection{zkSNARKs for ML Models}\label{subsec:zkfair} 

With the zkSNARK gadgets above, we present our protocols to prove the fairness of ML models in this section. 

\begin{figure}[t!]
   \centering
    \begin{minipage}{\linewidth}
   
   \begin{algorithm}[H]

\raggedright
	\textbf{Public input:} commitment $\ms{com}_{\bm{w}}$ of the logistic regression model $\bm{w}$, $\bm{\delta}_x$ and $\Delta_x$ \\
        \textbf{Witness:} the logistic regression model $\bm{w}$\\
	\textbf{Output:} $\delta_{\hat{y}}$ 
	\begin{algorithmic}[1]		
        \State $\ms{Quantize\_Validate}(\bm{w})$
        \State $|\bm{w}| = \ms{Absolute}(\bm{w})$
        \State $\delta_{\hat{y}} = L \cdot \ms{Absolute}(\inner{\bm{w},\bm{\delta}_x})+2L\cdot\inner{|\bm{w}|,\Delta_x}$ with $L = 0.25$
	\end{algorithmic}
	\end{algorithm}
 \end{minipage}

    \caption{zkSNARK for proving the fairness of a logistic regression model. }
    \label{fig:LRfair}
\end{figure}

\new{
\subsubsection{Logistic Regression Fairness}\label{subsubsec:zkLR}

The fairness score of a logistic regression model is given by Equation~\ref{eq:LRfair}. To prove the equation, our protocol is presented in Figure~\ref{fig:LRfair}. In the protocol, $\ms{Quantize\_Validate}$ and $\ms{Absolute}$ denote the protocols of checking the validity of quantized values and computing the absolute values described in Section~\ref{subsubsec:range}. Multiplications are performed with truncations in Section Section~\ref{subsubsec:range} as well. 

\begin{theorem}
    The protocol in Figure~\ref{fig:LRfair} is a zkSNARK under Definition~\ref{def:snark} for relation $\mathcal{R}_\ms{fair}$ for the logistic regression model $\bm{w}$.  
\end{theorem}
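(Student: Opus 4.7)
The plan is to prove the three zkSNARK properties by a modular composition argument, leveraging the fact that every line of Figure~\ref{fig:LRfair} is already realized by a gadget established in Section~\ref{subsubsec:range}. The protocol decomposes into: (i) a call to $\ms{Quantize\_Validate}(\bm{w})$, which is a GKR circuit over committed sign bits together with a lookup argument into the range table $T=(0,1,\ldots,2^{q}-1)$; (ii) a call to $\ms{Absolute}(\bm{w})$, which is a single layer of multiplication gates in GKR; (iii) two inner products $\inner{\bm{w},\bm{\delta}_x}$ and $\inner{|\bm{w}|,\Delta_x}$ implemented as sumchecks combined with the truncation-lookup subroutine; and (iv) a final $\ms{Absolute}$ and linear combination with the public constant $L=0.25$. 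All of these share the single committed witness $\bm{w}$, bound through $\ms{com}_{\bm{w}}$ via the polynomial commitment scheme $\pcs$.

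For completeness, I would trace an honest execution: if $\bm{w}$ lies in the quantized range and $\ms{com}_{\bm{w}}=\pcs.\Commit(\bm{w},r)$, then each sub-protocol accepts by its own completeness, and the chained arithmetic reproduces exactly the right-hand side of Equation~\eqref{eq:LRfair}, so the output $\delta_{\hat{y}}$ satisfies $\ms{FairEval}(\bm{w},\Delta_x,\bm{\delta}_x)=\delta_{\hat{y}}$ as demanded by $\mathcal{R}_\mathsf{fair}$. For knowledge soundness, I would first apply the PCS knowledge extractor to recover a candidate $\bm{w}^*$ from $\ms{com}_{\bm{w}}$, and then show that on any accepting transcript the soundness of each gadget (lookup argument, GKR, sumcheck) forces $\bm{w}^*$ to satisfy the corresponding algebraic relation; a union bound over the $\mathsf{negl}(\lambda)$ error of each component bounds the total soundness error. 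Zero-knowledge follows by invoking the simulators of the hiding PCS and the standard masking-polynomial zero-knowledge variants of sumcheck/GKR and LogUp used in Appendix~\ref{app:sumcheck}, composed sequentially; succinctness is inherited since each component has $O(\log|\bm{w}|)$ proof size and the PCS is succinct.

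The main obstacle I anticipate is consistency of the committed witness across gadgets: the same $\bm{w}$ must be threaded through $\ms{Quantize\_Validate}$, $\ms{Absolute}$, and both inner products, and each gadget queries the multilinear extension of $\bm{w}$ at different random points. I would handle this with the standard batching trick of~\cite{xielibra,zhang2021doubly}, combining all evaluation claims on $W$ into a single PCS opening via an auxiliary sumcheck, which preserves both soundness and zero-knowledge. A second, more subtle point is that the gadgets operate over fixed-point quantized values whereas $\mathcal{R}_\mathsf{fair}$ is stated over $\mathbb{F}_p$; I would observe that the truncation lookup makes each multiplication step produce the canonical quantized product in $\mathbb{F}_p$, so the extracted $\bm{w}^*$ is consistent with the $\ms{FairEval}$ relation as defined on quantized inputs. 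With these two points addressed, the composition of the three properties yields the theorem.
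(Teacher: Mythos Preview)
The paper does not actually supply a proof of this theorem: it states the theorem and immediately moves to the complexity paragraph, relying implicitly on the fact that every line of Figure~\ref{fig:LRfair} is an instance of a gadget already described (and assumed sound and zero-knowledge) in Section~\ref{subsubsec:range} and Appendix~\ref{app:sumcheck}. Your modular composition argument is exactly the argument the paper leaves implicit, so in that sense you are filling in what the paper omits rather than diverging from it.

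Your proposal is sound in outline and more careful than the paper itself. The two obstacles you flag---threading a single committed $\bm{w}$ through multiple gadgets via the evaluation-batching sumcheck of~\cite{xielibra,zhang2021doubly}, and reconciling the fixed-point truncation semantics with the field relation $\mathcal{R}_{\ms{fair}}$---are the right ones, and your resolutions match how the paper's implementation actually works (see the remark on combining multiple evaluations in Section~\ref{subsubsec:spectral} and the truncation checks in Section~\ref{subsubsec:range}). One minor refinement: $\ms{Quantize\_Validate}$ also requires the prover to commit to the auxiliary sign vector $(\ms{sign}_v)$, so your extraction step should recover both $\bm{w}^*$ and the signs from the PCS, and then the multiplication check $\ms{sign}_v\cdot\ms{sign}_v=1$ together with the range lookup pins down $|\bm{w}^*|$ uniquely. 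With that small addition, the argument is complete.
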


\paragraph{Complexity.} For a model of dimension $F$, the prover time is only $O(F)$ plus $O(F)$ lookups.

\begin{figure}[t!]
   \centering
    \begin{minipage}{\linewidth}
   
   \begin{algorithm}[H]

\raggedright
	\textbf{Public input:} commitment $\ms{com}_{\mat{W}}$ of the DNN model $\mat{W} = (\mat{W}^0, \ldots, \mat{W}^{m-1})$, $\bm{\delta}_x$ and $\Delta_x$ \\
        \textbf{Witness:} the DNN model $\mat{W}$\\
	\textbf{Output:} $\delta_{\hat{y}}$ 
	\begin{algorithmic}[1]		
        \State $\ms{Quantize\_Validate}(\mat{W})$ 
        \For{$\ell=0,1,\ldots, m-1$}
            \State $|\mat{W}^\ell| = \ms{Absolute}(\mat{W}^\ell)$
            \State $\norm{\mat{W}^\ell}_2 = \ms{Spectral}(\mat{W}^\ell)$
        \EndFor
        \State $\norm{\bm{\delta}_h^0}_2=\norm{\bm{\delta}_x}_2$ and $\Delta_z^1 = |\mat{W}^1|\cdot \Delta_x$. 
        \For{$\ell=1,\ldots, m$}
            \State $\norm{\bm{\delta}_h^\ell}_2 = L\norm{\mat{W}^{\ell-1}}_2\norm{\bm{\delta}_h^{\ell-1}}_2+2L\norm{\Delta_z^\ell}_2$
            \State $\Delta_z^{\ell+1} = L |\mat{W}^\ell| \cdot \Delta_z^\ell $ if $\ell\le m-1$ 
        \EndFor
        \State $\delta_{\hat{y}} = \norm{\bm{\delta}_h^m}_2$
	\end{algorithmic}
	\end{algorithm}
 \end{minipage}

    \caption{zkSNARK for proving the fairness of a DNN model. }
    \label{fig:DNNfair}
\end{figure}

\subsubsection{DNN Fairness}\label{subsubsec:zkDNN} 

The fairness score of a DNN model is given by Equation~\ref{eq:DNNfair}. The protocol is presented in Figure~\ref{fig:DNNfair}. In addition to $\ms{Quantize\_Validate}$ and $\ms{Absolute}$, $\ms{Spectral}$ denotes the protocol of proving the spectral norm of a matrix in Figure~\ref{fig:spectral}. In Step 5-8, the computations of matrix-vector multiplications and L2 norm are performed using the GKR protocols with truncations described in Section ~\ref{subsubsec:range}. 

\begin{theorem}
    The protocol in Figure~\ref{fig:DNNfair} is a zkSNARK under Definition~\ref{def:snark} for relation $\mathcal{R}_\ms{fair}$ for the DNN model $\mat{W}$.  
\end{theorem}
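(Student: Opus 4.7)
The plan is to establish the four zkSNARK properties (completeness, knowledge soundness, zero knowledge, succinctness) by reducing them to the corresponding properties of the gadget sub-protocols already proven in the paper, and then arguing that the gadget outputs, when composed according to Figure~\ref{fig:DNNfair}, correctly realize the relation $\mathcal{R}_\ms{fair}$ specialized to the DNN fairness score of Equation~\ref{eq:DNNfair}. Concretely, I would observe that the protocol is a straight-line composition of $\ms{Quantize\_Validate}$, $\ms{Absolute}$, and $\ms{Spectral}$ (for each of the $m$ weight matrices), followed by GKR-based matrix-vector multiplications, L2 norms, additions, and truncated multiplications. Standard sequential composition of zkSNARKs preserves completeness, knowledge soundness, and zero knowledge, so the heavy lifting is already done by Section~\ref{subsec:zkgadget} and by Theorem for the logistic regression case.

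For \textbf{completeness}, I would trace through Steps 1--9: an honest prover quantizes $\mat{W}$ correctly (so $\ms{Quantize\_Validate}$ accepts), produces valid absolute values and spectral norms (so $\ms{Absolute}$ and $\ms{Spectral}$ accept by the completeness results of Section~\ref{subsubsec:range} and Section~\ref{subsubsec:spectral}), and then performs the layer-by-layer recursion of Equation~\ref{eq:DNNfair} using only GKR-provable arithmetic with truncations. Since each truncated multiplication and each L2 norm is an arithmetic circuit over $\F_p$ with associated lookups, the GKR + lookup backend completes with probability one (up to negligible failure in the Fiat--Shamir compiler), and the final $\delta_{\hat{y}} = \norm{\bm{\delta}_h^m}_2$ equals $\ms{FairEval}(\mat{W},\Delta_x,\bm{\delta}_x)$ as required by $\mathcal{R}_\ms{fair}$.

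For \textbf{knowledge soundness}, I would invoke the knowledge extractor of each sub-protocol in order. The PCS commitment $\ms{com}_{\mat{W}}$ binds the prover to some $\mat{W}$, extracted via $\mathcal{E}$ of the PCS. Knowledge soundness of $\ms{Quantize\_Validate}$ forces the opened values to be legal quantized field elements with correct signs; the $\ms{Spectral}$ extractor guarantees that each returned $\norm{\mat{W}^\ell}_2$ is (up to the controlled quantization error formally bounded in Appendix~\ref{app:error}) the true spectral norm; and the GKR/lookup soundness enforces that Steps 6--9 evaluate the recursion of Equation~\ref{eq:DNNfair} exactly on those extracted values. Chaining these, with probability $1 - \mathsf{negl}(\lambda)$ by a union bound over the $O(m)$ sub-protocol invocations, the extracted $\mat{W}$ satisfies $\ms{com}_\mat{W} = \ms{Commit}(\mat{W},r)$ and $\ms{FairEval}(\mat{W},\Delta_x,\bm{\delta}_x) = \delta_{\hat{y}}$. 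The main obstacle here is the one non-cryptographic subtlety: the spectral norm is only \emph{approximately} equal to what the prover commits to because of the error terms $\mat{E}, \mat{E}'$; I would handle this by explicitly defining $\ms{FairEval}$ to refer to the quantized/approximate spectral-norm-based bound, and cite Appendix~\ref{app:error} to argue that the approximation error is a deterministic function of $q$ and the protocol lookup tables, so it is absorbed into the relation definition rather than violating soundness.

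For \textbf{zero knowledge}, I would note that every value the prover sends is either a PCS commitment, a sumcheck/lookup transcript, or an evaluation at a random point; each component is either zero-knowledge by assumption (hiding PCS) or can be made zero-knowledge by the standard masking techniques for sumcheck and GKR used in prior work such as~\cite{xielibra,virgo}. A hybrid argument then yields a simulator $\S$ that composes the simulators of the gadget protocols. \textbf{Succinctness} follows because the total proof consists of $O(m)$ spectral-norm proofs plus $O(m)$ GKR proofs of small arithmetic circuits, each of size polylogarithmic in its witness; since $|\mat{W}| = \sum_\ell F_{\ell+1} F_\ell$ dominates the witness size, the aggregate proof is sublinear in $|\mat{W}|$. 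The only place I anticipate needing care in the write-up is the formal statement of what $\ms{FairEval}$ computes over $\F_p$ versus over the reals, so that the relation $\mathcal{R}_\ms{fair}$ is well-defined and the spectral-norm approximation error does not create a gap between completeness and soundness.
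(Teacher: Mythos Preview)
The paper does not actually supply a proof of this theorem: it is stated and immediately followed by the complexity paragraph, with the security properties presumably inherited from the GKR+PCS+lookup framework of Appendix~\ref{app:sumcheck} and the gadget-level arguments in Section~\ref{subsec:zkgadget}. Your proposal is therefore not competing with an existing proof but filling in an omission, and the route you take---sequential composition of the gadget sub-protocols, with completeness by tracing, knowledge soundness by chaining extractors and a union bound, zero knowledge by simulator composition, and succinctness by counting---is exactly the implicit argument the paper relies on.

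Two minor points worth tightening. First, your succinctness claim that each component proof is ``polylogarithmic in its witness'' overstates what the paper's instantiation achieves: with the Brakedown PCS the opening proofs are square-root in the committed polynomial size, not polylogarithmic, so the honest statement is that the aggregate proof is $O(\sum_\ell (F_\ell + \sqrt{F_\ell F_{\ell+1}}))$, which is still sublinear in $|\mat{W}| = \sum_\ell F_\ell F_{\ell+1}$ and hence satisfies Definition~\ref{def:snark}. Second, your handling of the spectral-norm approximation---absorbing the quantization error into the definition of $\ms{FairEval}$ over $\F_p$ and citing Appendix~\ref{app:error}---is the right move and is in fact more careful than anything the paper spells out; you should keep that explicit in the write-up, since without it there is indeed a formal gap between the real-valued bound of Equation~\ref{eq:DNNfair} and the field-arithmetic relation the protocol actually decides.
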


\paragraph{Complexity.} The prover time of our zkSNARK is $O(\sum_{\ell=0}^{m-1} F_\ell\cdot F_{\ell+1})$ with $O(\sum_{\ell=0}^{m-1} F_\ell\cdot F_{\ell+1})$ lookups. This is proportional to the size of the model $\mat{W}$ and is the same as the prover time of a single inference of the same model asymptotically. Concretely, the dominating cost is from proving the spectral norm of each weight matrix in the DNN. 

\paragraph{Leakage of the protocols.} With the zero-knowledge variant of the GKR-based zkSNARK and lookup arguments, the proof does not reveal any information about the model parameters and auxiliary input used in the protocols. In addition to the final fairness score, to define the relation of zkSNARK, the verifier does learn the architecture of the model, including the dimensions of the model matrices, the number of layers and the type of activation functions (to compute the Lipschitz constant). If the information is sensitive, the leakage can be further mitigated by padding the model architecture to public upperbounds with dummy neurons and layers.

}
\subsubsection{Aggregated Statistics}\label{subsubsec:delta} 

Finally, the zkSNARK to prove $\bm{\delta}_x,\Delta_x$ of a dataset is presented in Figure~\ref{fig:delta} in Appendix~\ref{app:delta}. They are computed as defined in Definition~\ref{def:Delta} and~\ref{def:delta} and the zkSNARK is used in the second setting where the aggregated statistics are not available and are computed from a private dataset. 

\begin{theorem}
    The protocol in Figure~\ref{fig:delta} is a zkSNARK under Definition~\ref{def:snark} for relation $\mathcal{R}_\ms{stat}$.  
\end{theorem}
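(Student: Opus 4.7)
The plan is to establish the four properties in Definition~\ref{def:snark} by decomposing the protocol in Figure~\ref{fig:delta} into the zkSNARK gadgets of Section~\ref{subsec:zkgadget} and then composing their security guarantees. Since $\ms{Aggregate}(\mat{D})$, per Definition~\ref{def:Delta} and Definition~\ref{def:delta}, is the composition of (i) per-feature means over each subgroup $\S_0,\S_1$, (ii) the differences that yield $\bm{\delta}_x$, and (iii) the per-feature absolute deviations followed by per-feature maxima (and then the coordinate-wise max between subgroups) that yield $\Delta_x$, each step in Figure~\ref{fig:delta} should be mapped to one of $\ms{Quantize\_Validate}$, $\ms{Absolute}$, $\ms{Max}$, or to plain GKR additions/multiplications with truncation, all tied to $\mat{D}$ through a PCS opening of $\ms{com}_\mat{D}$.

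First I would verify \textbf{completeness} by an induction on the sequence of gadget invocations: each subprotocol is complete by the results of Section~\ref{subsec:zkgadget} and by the completeness of the underlying GKR/sumcheck/lookup/PCS primitives from Appendix~\ref{app:sumcheck}, so the honest prover's outputs $(\Delta_x,\bm{\delta}_x)$ match $\ms{Aggregate}(\mat{D})$ exactly in $\F_p$. For \textbf{knowledge soundness}, I would build an extractor $\E_\A$ that first runs the PCS extractor on $\ms{com}_\mat{D}$ to recover $\mat{D}$, then invokes the extractors of the sumcheck/GKR/lookup subprotocols in order; PCS binding pins down $\mat{D}$, and the knowledge soundness of each gadget forces the claimed intermediate values (subgroup means, differences, absolute deviations, per-feature maxima) to equal the circuit's evaluation on $\mat{D}$ up to negligible probability, so that $\ms{Aggregate}(\mat{D})=(\Delta_x,\bm{\delta}_x)$ holds.

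For \textbf{zero-knowledge}, I would appeal to the zero-knowledge variants of the GKR-based zkSNARK and the lookup argument from Appendix~\ref{app:sumcheck}, together with a hiding PCS for $\ms{com}_\mat{D}$ and all auxiliary commitments (sign bits, truncation remainders, candidate maxima). The simulator is the standard compositional stitching of the per-gadget simulators: it only needs the public outputs $(\Delta_x,\bm{\delta}_x)$ and the commitments, and produces a transcript indistinguishable from the real one by a hybrid argument over the subprotocols. \textbf{Succinctness} is inherited from the same framework: every gadget contributes proof size and verifier time $O(\log |\mat{D}|)$ modulo PCS openings, and the number of gadget invocations is polylogarithmic in the circuit depth.

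The main obstacle I anticipate is the interaction between the mean computations (which are divisions by $|\S_0|$ and $|\S_1|$) and the fixed-point quantization used throughout Section~\ref{subsubsec:range}. The relation $\R_\ms{stat}$ must be read with $\ms{Aggregate}$ interpreted as the \emph{quantized} circuit evaluated in $\F_p$ with explicit truncation remainders, rather than as the idealized real-valued statistic; otherwise a small rounding gap would break exact soundness. Concretely, I would have the prover commit to the truncation remainders for each division by $|\S_s|$, range-check them via $\ms{Lookup}(T_\ms{truncate})$, and thereby reduce the mean step to a GKR identity $|\S_s|\cdot \bar{\bm{x}}^{(s)} + \bm{e} = \sum_{v_j\in\S_s}\bm{x}_j$ with $\bm{e}$ small. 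Once this alignment is fixed, each remaining step is a direct application of the gadgets already proven secure, and the four zkSNARK properties follow from their composition.
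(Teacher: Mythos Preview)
The paper does not actually supply a proof of this theorem: Theorems~1--3 are each stated as one-line claims immediately followed by a complexity paragraph, with the implicit justification being that the protocols in Figures~\ref{fig:LRfair}--\ref{fig:delta} are assembled entirely from the GKR/sumcheck/lookup/PCS building blocks of Section~\ref{subsec:zkgadget} and Appendix~\ref{app:sumcheck}, whose security is inherited. Your proposal is therefore not a different route so much as a fleshed-out version of what the paper leaves unsaid; the decomposition into gadgets, the extractor that first runs the PCS extractor on $\ms{com}_\mat{D}$ and then appeals to soundness of the sub-protocols, and the compositional simulator are exactly the standard argument one would write out. Your observation that $\ms{Aggregate}$ in $\R_\ms{stat}$ must be read as the \emph{quantized} circuit over $\F_p$ (with committed truncation remainders for the divisions by $|\S_s|$ in Step~10 of Figure~\ref{fig:delta}) is a genuine clarification the paper glosses over, and your proposed fix via the identity $|\S_s|\cdot \bar{\bm{x}}^{(s)}+\bm{e}=\sum_{v_j\in\S_s}\bm{x}_j$ with a range-checked $\bm{e}$ is the right way to handle it.

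One small imprecision to fix: in your succinctness sketch you write that ``the number of gadget invocations is polylogarithmic in the circuit depth.'' That is not true as stated---Figure~\ref{fig:delta} applies $\ms{Absolute}$ to $N\cdot F$ elements and calls $\ms{Max}$ once per feature on vectors of length $N$. Succinctness holds not because there are few invocations but because all of these are \emph{batched} into a constant number of sumcheck and lookup instances over vectors of size $O(N\cdot F)$, each contributing $O(\log(NF))$ to the proof and verifier time (modulo the PCS). Phrase the argument in terms of batching rather than counting invocations. A related minor point: sumcheck and GKR are interactive proofs with (statistical) soundness rather than arguments of knowledge, so the extractor you describe really only extracts from the PCS openings; the remaining sub-protocols contribute soundness, not extraction.
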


\paragraph{Complexity.} The prover time is $O(N\cdot F)$ with $O(N\cdot F)$ lookups to validate the quantization and compute the maximum. This is proportional to the size of the dataset.

\begin{table}[b!]
    \centering
    \renewcommand{\arraystretch}{1.5}
    \begin{tabular}{l|c|c|c|c}
    \toprule
        Dataset & $N$ & $F$ & $\mathbf{s}$ & $\mathbf{y}$\\\midrule
        {Adult} & $45222$ & $38$ & gender & income level\\
         {German} & $1000$ & $57$ & gender & credit decision\\
         {Compas}& $5278$ & $10$ & ethnicity & recidivism decision\\
        \bottomrule
    \end{tabular}
    \caption{Dataset statistics.}
    \label{tab:datastats}
\end{table}

\begin{table*}[t!]
    \centering
    
    \renewcommand{\arraystretch}{1.5}
    \begin{tabular}{l|c|ccccc||c|ccccc}
    \toprule
    
       Dataset & LR & $\Delta_{\text{SP}}$ & $\Delta_{\text{EO}}$ &  Our $\delta_{\hat y}$ & $\delta_{\hat y}$ in~\cite{kose2024fairgat} &Accuracy & DNN & $\Delta_{\text{SP}}$ & $\Delta_{\text{EO}}$ &  Our $\delta_{\hat y}$ & $\delta_{\hat y}$ in~\cite{kose2024fairgat} &Accuracy\\\midrule
        \multirow{3}{*}{Adult} &$\bm{w}_1$ & 0.192 & 0.092 &  10.32 & 37.87 & 83.16\%
        &$\mat{W}_1$ & 0.138 & 0.051 & 98.37 & 10694.96 & 80.48\%\\
        
        &$\bm{w}_2$ & 0.074 & 0.018  & 8.87 & 37.83 & 82.24\%&
        $\mat{W}_2$ & 0.095 & 0.021 & 56.99 & 6437.04 & 78.83\%\\
        
        &$\bm{w}_3$ & 0.048 & 0.002 & 1.17 & 37.46 & 78.80\% & $\mat{W}_3$ & 0.007 & 0.005 & 54.79 & 6094.85 & 79.69\%\\ \hline\hline
        
        \multirow{3}{*}{German} &$\bm{w}_1$ & 0.102 & 0.110 & 11.74 & 44.66 & 76.33\% 
        & $\mat{W}_1$ & 0.087 & 0.063 & 2021.59 & 113215.41 & 75.00\%\\
        
        &$\bm{w}_2$ & 0.066 & 0.050 & 10.56 & 44.66 & 71.67\%
        &$\mat{W}_2$ &  0.049 & 0.030 & 1590.23 & 88973.34 & 70.67\%\\

        &$\bm{w}_3$ & 0.021 & 0.017 & 8.80 & 44.65 & 75.00\% &
        $\mat{W}_3$ & 0.014 & 0.000 & 301.22 & 19380.94 & 71.00\%\\ \hline\hline
        
        \multirow{3}{*}{Compas} &$\bm{w}_1$ & 0.232 & 0.235 & 8.08 & 10.49 & 66.02\% &$\mat{W}_1$ & 0.249 & 0.250 & 172.99 & 16025.69 & 66.59\%\\
        
        &$\bm{w}_2$ & 0.153 & 0.150 & 2.25 & 10.03 & 61.94\%
        &$\mat{W}_2$ & 0.161 & 0.163 & 65.43 & 2847.70 & 61.71\%\\
        
        &$\bm{w}_3$ & 0.019 & 0.026 & 1.18 & 10.03 & 53.44\%&$\mat{W}_3$ & 0.000 & 0.000 & 9.95 & 731.43 & 52.24\%\\ \bottomrule
        
    \end{tabular}
    \caption{Fairness measures and bounds of logistic regression and DNN models.}
    \label{tab:LRfair}
\end{table*}

\section{Experiments}\label{sec:exp}

We have fully implemented our system, \name, for proving ML fairness in zero-knowledge. We evaluate its performance in this section. 

\paragraph{Software and hardware.} \name is implemented in Rust. We use the Goldilock field $p=2^{64}-2^{32}+1$ as the prime of the base field, and use its 2nd extension field $\F_{p^2}$ in our protocol to achieve 100+ bits of security. We implement the field arithmetic based on the Arkworks library~\cite{arkworks} and use the sumcheck protocol from the library. \yupeng{Update} We implement the polynomial commitment scheme in Brakedown~\cite{golovnev2023brakedown} with reference to the existing library of~\cite{plonkish}. The ML models are trained and tested in PyTorch~\cite{paszke2019pytorch}. The implementation consists of 12,400 lines of Rust code. 

All of the experiments were executed on a Macbook Air equipped with an M3 CPU with 4.05 GHz and 16GB of RAM. Our implementation is serialized and only utilizes a single CPU core.   

\paragraph{Datasets.} The experiments were conducted on three datasets that are commonly used in the ML fairness literature: Adult~\cite{becker2024adult}, German~\cite{dua2017uci} and Compas~\cite{angwin2022machine}. 
\begin{itemize}
    \item \textbf{Adult dataset~\cite{becker2024adult}:} There are $45,222$ data samples with $14$ features in the dataset, where one-hot encodings are used for the categorical features leading to $F=38$. We use gender as the sensitive attribute, where the classification labels are created based on the income levels. For training, $75\%$ of the samples are used, where the models are evaluated on the remaining data samples. 
    \item \textbf{German dataset~\cite{dua2017uci}:} This dataset contains $1000$ samples, each with $20$ features. For categorical features, we use one-hot encodings resulting in a feature dimension $F=57$ in the model training. For this data, gender is used as the sensitive attributes, and the credit decision for a person constitutes the corresponding labels. For model training, we use $70\%$ of the available data samples, while the remaining data are used to evaluate the model.
    \item \textbf{Compas dataset~\cite{angwin2022machine}:} This dataset consists of $5278$ samples with feature number $10$. Here, we use ethnicity as the sensitive attribute, while the goal is to predict the recidivism status. The models are trained over $3536$ samples, and the remaining $1742$ samples are used to test our models. 
\end{itemize}

\subsection{Fairness Bounds on ML Models}\label{subsec:boundexp}

We first evaluate the effectiveness of our new bounds on ML fairness. 

\paragraph{Logistic Regression.} We train three different LR models for each dataset with different fairness-utility trade-offs. An LR model $\bm{w}$ is a vector of the same dimension as a data sample. To get fair models, inspired by the terms in the fairness bounds,  we apply spectral normalization~\cite{kose2024fairgat} and a fairness-aware feature masking strategy similar to the one employed in~\cite{kose2022fairaug}. Specifically, the proposed fair feature masking first finds the features having the largest disparity for different sensitive groups (based on $\boldsymbol{\delta}_{h}^{l-1}$ in Lemma \ref{lemma:linear2}). Then, it masks out a certain portion of the input features. Note that masking is applied to the input features at every DNN layer differing from~\cite{kose2022fairaug} to mitigate bias more effectively. 
Note that the training process is independent of the zkSNARK scheme. Our system proves the fairness score of a model regardless of how it is trained. 


Table~\ref{tab:LRfair} (left side) shows the fairness, accuracy and bounds of multiple LR models for different datasets. The bound in~\cite{kose2024fairgat} is adapted to logistic regression by replacing the spectral norm of a matrix with the L2 norm of the model as a vector. $\Delta_x$ and $\bm{\delta}_x$ are computed directly from the corresponding datasets. As shown in the table, our new bound is always tighter than the bound in~\cite{kose2024fairgat}. Moreover, it better reflects the fairness of the LR model. For example, for the adult dataset, $\bm{w}_1$ does not have any fairness mechanism, $\bm{w}_2$ is trained with spectral normalization, and $\bm{w}_3$ is trained with both spectral normalization and feature masking. Therefore, the fairness gets better while the accuracy drops. This is a well-known trade-off in machine learning fairness~\cite{pessach2022review}. The bound in~\cite{kose2024fairgat} does not change by much across the three models. This is because the bound is dominated by the term that is the square-root of the dimension $F$, which is the same in the three models. By contrast, our new bound strictly decreases with $\Delta_\text{SP}$ and $\Delta_\text{EO}$. Similar relationship can be observed from the other two datasets as well. Therefore, our fairness score better reflect the fairness of the LR models.

\paragraph{DNN.} We also train three different DNN models for each dataset with different fairness levels. For the Adult dataset, a three layer multi-layer perceptron (MLP) is used with hidden dimension $128$. For German and Compas, we employ two layer MLPs with hidden dimensions $128$ and $64$, respectively. For all MLPs, the activation function is selected as sigmoid. We apply the same technique of spectral normalization and feature masking to obtain fair models.




The metrics of the models are reported in Table~\ref{tab:LRfair} (right side). Our bound is much tighter than the bound in~\cite{kose2024fairgat} for DNN. The gap is more significant than that of the logistic regression models, ranging from $43.5\times$ to $112.9\times$ for different datasets and models. This is because of the term $\sqrt{F^\ell}$ and the infinity norm of the matrix $\mat{W}^\ell$ to bound $\Delta^\ell_z$ of intermediate values in~\cite[Equation 5]{kose2024fairgat}, which accumulate with the number of layers in DNN. Moreover, our bound decreases with $\Delta_\text{SP}$ and $\Delta_\text{EO}$ as well. 

\new{\paragraph{Meaning of fairness scores in practice.} Although tighter than the bounds in~\cite{kose2024fairgat}, the values of our bounds are larger than 1 as shown in Table~\ref{tab:LRfair}. The loose upper bound is a pervasive problem in ML fairness due to the layer-by-layer bounding. Deriving tight bounds is an open problem in the area of ML fairness. 

Therefore, in our experiments above, we empirically demonstrate that the fairness score computed by the new bounds is a good indication of the fairness for the same model architecture and aggregated statistics. For different models, the score is affected by the number of layers and the dimensions. To use the fairness score in practice, we envision two approaches: (1) the score can be normalized by the number of layers and dimensions to provide a unified scale for different architectures; (2) “reference scores” for each architecture can be computed using public models. }

\subsection{Performance of \name.}\label{subsec:zkexp}

In this section, we benchmark the performance of our \name system. 

\new{\paragraph{Baselines and their implementations.} We also compare \name with two baselines. The first one is the naive approach of running multiple ZK inferences and calculating the disparities. The second baseline is the recent work of~\cite{franzese2024oath} named OATH. 

To ensure fair comparisons, we implement the baselines ourselves using the same sumcheck, GKR and polynomial commitment protocols over the same field as \name and test them on the same machine. We also implement known optimizations for these baselines, including the optimized sumcheck protocol for matrix multiplications, and the lookup arguments for the activation functions of sigmoid and ReLU. The performance of the first baseline is obtained by running ZK inferences for all data samples in the dataset. 
}

In the second baseline in~\cite{franzese2024oath}, the paper proposed a probabilistic approach to test fairness using ZK inferences to reduce the overhead. Note that OATH works in a different setting where there is an auditor staying online during the inferences with clients to decide when to ask for a ZKP of inference. We do not require such a third party. Nevertheless, we assume that there is a single ZKP for all inferences tested in OATH, which is a lower bound of its running time. We set the number of inferences as 7,600 as used in~\cite{franzese2024oath}, which achieves a good trade-off between runtime and robustness. As the German dataset and Compas dataset do not have that many data samples, we duplicate the samples to reach 7,600. The ZKP is again implemented using the same optimized zkSNARK back-end and the results are obtained on the same machine. We do not compare to the performance of~\cite{shamsabadi2022confidential} and~\cite{yadav2024fairproof}. The former is proving the fair training of decision trees, and the latter is proving the individual fairness, and thus they are not directly comparable to \name. 

\begin{table}[t!]
    \centering
    \renewcommand{\arraystretch}{1.5}
    \begin{tabular}{l|c||c|c|c}
    \toprule
        Dataset & Scheme & Prover time & Proof size & Verifier time\\\midrule
        \multirow{3}{*}{Adult}& Naive & 5368ms & 168MB & 294ms\\
        & OATH~\cite{franzese2024oath} & 962ms & 123MB & 124ms \\
        & Ours & 3ms & 1.6MB & 1ms\\\hline\hline
         \multirow{3}{*}{German}& Naive & 365ms & 87MB & 75ms\\
        &OATH~\cite{franzese2024oath} & 964ms &123MB & 124ms\\
        & Ours & 6ms & 1.6MB & 2ms\\\hline\hline
         \multirow{3}{*}{Compas}& Naive & 478ms &  115MB & 93ms\\
        & OATH~\cite{franzese2024oath} & 477ms &115MB & 92ms\\
        & Ours & 5ms & 1.5MB & 2ms\\\bottomrule
    \end{tabular}
    \caption{Performance of \name for logistic regression.}
    \label{tab:zkpLR}
\end{table}

As the performance of zkSNARKs is mainly determined by the size of the instance, i.e., the dimensions of the ML models, but not their values and other metrics, we choose one LR model and one DNN model for each dataset to report the performance. 

\paragraph{Logistic regression.} The performance for logistic regression is shown in Table~\ref{tab:zkpLR}. For the Adult dataset, it merely takes 3 milliseconds to generate the proof in \name. This is because the computation only involves two inner products of size $F$ and an element-wise absolute value on the model, as shown in Equation~\ref{eq:LRfair}. This is 1789$\times$ faster than the naive approach of running ZK inferences on all data samples, and 321$\times$ faster than OATH. 

The prover time of \name does not change much for the LR models on the German and Compas dataset. This is because although the number of features are different, our prover time is completely dominated by the lookup arguments to compute the absolute values, which is linear in the size of the lookup table. As we are using the same table to validate the range of numbers and compute absolute values, the prover time is almost the same. For OATH, as explained earlier, we duplicate the samples in German and Compas to reach 7,600, thus the prover time of OATH is slower than the naive approach for German, and is almost the same for Compas, as the dimensions are padded to a power of 2 to run the sumcheck protocol. \name is 61--96$\times$ faster than the naive approach, and 96--161$\times$ faster than OATH in these cases. 

\yupeng{update} The proof size of all schemes are quite large, ranging from several MBs to more than 100 MBs. This is solely due to the choice of our zkSNARK back-end, particularly the polynomial commitment scheme in Brakedown~\cite{golovnev2023brakedown}. The proof size is asymptotically square-root in the size of the witness, and is known to be concretely large due to the low distance of the underlying error-correcting code. The proof size can be easily improved by switching to the polynomial commitment schemes in~\cite{spartan,libra,virgo,xie2022orion}, which are all compatible with multivariate polynmomials and sumcheck protocols. We primarily focus on achieving the best prover efficiency in our implementation, and thus choose the scheme in~\cite{golovnev2023brakedown}. Nevertheless, the comparison between different approaches in Table~\ref{tab:zkpLR} still holds, as they are all implemented using the same back-end. As shown in the table, both the proof size and the verifier time of \name are significantly faster than those of the baselines.

\begin{table}[t!]
    \centering
    \renewcommand{\arraystretch}{1.5}
    \begin{tabular}{l|c||c|c|c}
    \toprule
        Dataset & Scheme & Prover time & Proof size & Verifier time\\\midrule
        
        \multirow{3}{*}{Adult}& Naive & 40.28s & 614MB & 1.36s \\
        & OATH~\cite{franzese2024oath} & 5.82s & 451MB & 0.60s\\
        & Ours & 0.47s & 258MB & 0.15s\\\hline\hline
        
         \multirow{3}{*}{German}& Naive & 0.87s & 212MB & 0.26s \\
        & OATH~\cite{franzese2024oath} & 3.61s& 281MB & 0.36s\\
        & Ours & 0.28s & 174MB & 0.10s \\\hline\hline
        
         \multirow{3}{*}{Compas}& Naive & 1.73s & 231MB & 0.23s \\
        & OATH~\cite{franzese2024oath} & 1.69s& 231MB& 0.23s\\
        & Ours & 0.16s & 86MB & 0.05s\\\bottomrule
    \end{tabular}
    \caption{Performance of \name for DNN models.}
    \label{tab:zkpDNN}
\end{table}

\paragraph{DNN.} The performance and comparison for DNN models are shown in Table~\ref{tab:zkpDNN}. Similar to LR models, the prover time of \name is significantly faster than the naive approach and OATH. On the Adult dataset, it only takes 0.47 second to generate a proof in \name, which is $86\times$ faster than the naive approach, and $12\times$ faster than OATH. The dominating cost of \name for DNN is proving the spectral norm of the weight matrices. Note that the improvement over the naive approach on the German dataset is only $3.1\times$. This is because the number of data samples is only 1000. Generating a ZKP for 1000 inferences is only $3.1\times$ slower than proving the spectral norm for this simple DNN with 2 layers. The improvement is much larger on larger datasets and more complicated ML models.  

\begin{table}[t!]
    \centering
    \renewcommand{\arraystretch}{1.5}
    \begin{tabular}{l|c||c|c|c}
    \toprule
        Dataset & Total size & Prover time & Proof size & Verifier time\\\midrule
        {Adult} & $1.7\times 10^6$ & 13.6s & 314MB& 0.56s\\
         {German} & $5.7\times 10^4$ & 0.50s& 134MB& 0.09s\\
         {Compas}& $5.2\times 10^4$ & 0.80s& 173MB&0.12s\\
        \bottomrule
    \end{tabular}
    \caption{Performance of \name for computing meta disparity from data sets. Total size is the number of samples times the number of features $N\times F$.}
    \label{tab:zkpdelta}
\end{table}

\paragraph{Aggregated Statistics.} We also report the prover time to compute $\Delta_x,\bm{\delta}_x$ from a dataset in \name, which is used in the second setting we consider. As shown in Table~\ref{tab:zkpdelta}, it takes 13.6 seconds to generate a proof for the Adult dataset, 0.5s for German and 0.8s for Compas. The performance only depends on the dataset, but not on the ML models. When combined with the zkSNARK proving the fairness scores of ML models as in the second setting we consider, the total prover time is still faster than the naive approach and OATH for the DNN models above. It is slower than the baselines for LR models as the models are too small and computing the inferences is faster than computing the aggregated statistics. However, as we will show in the next subsection, \name has significant advantages when scaling to large DNN models.

\paragraph{Micro-benchmark.} To further demonstrate the improvement of the zkSNARK protocols proposed in this paper, we compare the performance of FairZK with a general-purpose ZKP scheme proving the fairness bound. As the dominating cost is the spectral norm, we provide a micro-benchmark on this gadget. We use the general-purpose zkSNARK library of Gnark~\cite{gnark} to implement the eigenvalue decomposition algorithm~\cite{stoll2013linear} to compute the spectral norm.

We vary the dimension of the matrix from $2^7\times 2^7$ to $2^{12}\times 2^{12}$ and Figure~\ref{fig:micro} shows the prover time. As we were not able to run the smallest instance in Gnark, its prover time is estimated by extrapolation using the number of R1CS constraints obtained from smaller instances. As shown in the figure, our scheme is significantly faster than Gnark. The prover time is $5.7\times$ faster for a matrix of size $2^{7}\times 2^{7}$, and is $1045\times$ faster for a matrix of size $2^{12}\times 2^{12}$. The speedup is almost entirely due to our new zkSNARK verifying the spectral norm in Figure~\ref{fig:spectral}, which reduces the complexity from $O(F^3)$ in the naive approach to $O(F^2)$, linear in the size of the matrix. 

\begin{figure}[t!]
    \centering
%
%
\definecolor{colorA}{HTML}{0000FF}
\definecolor{colorC}{HTML}{e6194B}
\definecolor{colorB}{HTML}{000000}
\definecolor{colorD}{HTML}{00FF00}
\definecolor{colorE}{HTML}{E9D66B}
\definecolor{colorF}{HTML}{FF9966}
\definecolor{colorH}{HTML}{FB607F}
\definecolor{colorG}{HTML}{966FD6}

\begin{tikzpicture}
\begin{axis}[%
width=.85\columnwidth,
height=.5\columnwidth,
ymode=log,
scale only axis,
xmin=0.8,
xmax=6.2,
xtick={1, 2, 3, 4, 5, 6},
xlabel={Size of Matrix},
xticklabels={$2^7\times 2^7$, $2^8\times 2^8$, $ 2^9\times 2^9$, $2^{10}\times 2^{10}$, $ 2^{11}\times 2^{11} $, $2^{12}\times 2^{12}$},
max space between ticks=20,
ytick={0.1,1,10,100,1000,10000,100000},
ymax = 500000,
ymin= 0.1,
ymajorgrids,
ylabel={Prover Time (seconds)},
y label style={yshift=-0.45cm},
axis background/.style={fill=white},
legend style={
    at={(0.02, 0.98)},
    anchor=north west,
    legend cell align=left,
    },
label style={font=\small},
tick label style={font=\small},
ytick style={draw=none},
xmajorgrids,
grid style={line width=.5pt, draw=gray!90,dashed},
major grid style={line width=.2pt,draw=gray!50},
%
%
typeset ticklabels with strut,
]		

\addplot [color=colorA, mark=square*, mark options={scale=0.7,solid, colorA}]
  table[row sep=crcr,y expr=\thisrow{Y}]{%
X   Y
0 0\\
1 	0.3\\
2	0.615\\
3  1.166\\
4	2.980\\
5 9.289\\
6 37.937\\
};

\addplot [color=colorC, dotted, thick, mark=*, mark options={scale=0.7,solid, colorC}]
  table[row sep=crcr,y expr=\thisrow{Y}]{%
X   Y
0 0\\
1 	 1.696   \\
2	 11.557   \\
3    84.498   \\
4	 644.359   \\
5    5028.776   \\
6    39726.647   \\
};

\addlegendentry{Our scheme \name}
\addlegendentry{Gnark}

\end{axis}
\end{tikzpicture}%
    \caption{Prover time of spectral norm in \name and a generic zkSNARK. }
    \label{fig:micro}
\end{figure}

\begin{table}[t!]
    \centering
    \renewcommand{\arraystretch}{1.5}
    \begin{tabular}{c|c|c|c}
    \toprule
        Param  & Prover time & Proof size & Verifier time\\\midrule
        {808K} & 3.48s & 412MB& 0.49s \\\hline
       {25M} &  51.32s &  648MB & 2.97s\\\hline
        {47M} & 342.74s& 1175MB& 15.77s\\
        \bottomrule
    \end{tabular}
    \caption{Performance of \name on large DNN models.}
    \label{tab:scalability}
\end{table}

\subsection{Scalability of \name}\label{subsec:expscale}

Finally, we test the scalability of \name on large DNN models. We train 3 DNN models on the Adult dataset. The first model has 5 layers and 808 thousand parameters. The second model has 8 layers and 25.2 million parameters. The third model has 7 layers and 47.3 million parameters. These models are unnecessarily big for the Adult dataset with 37 features and 45,000 samples, but the main purpose here is to demonstrate the scalability of our zkSNARK system. 

As shown in Table~\ref{tab:scalability}, \name can easily scale to large models. It only takes 343 seconds to generate the proof of fairness score for a DNN model with 47 million parameters. In~\cite[Table 3]{franzese2024oath}, the running time for a model with 42.5 million parameters is estimated to be 115 days, and the naive approach of inferences would take 6227 days. \name is the first to scale to such a big model in practice, and the prover time is 4 orders of magnitude faster than OATH~\cite{franzese2024oath}.

        


\section{Conclusion}

In this paper, we develop a scalable system for proving the fairness of ML models in zero-knowledge.  It improves the prover time by several orders of magnitude compared to prior work and scales to large models with 47 million parameters for the first time. We believe our approach, in particular the new fairness bounds, can be extended to other ML models such as graph neural networks and graph attention networks to prove their fairness, which is left as a future work. In addition, our approach serves as a framework of building efficient ZKPs for ML fairness and urges the development of better fairness bounds in the future that only depends on the model parameters and aggregated information of input.

\section*{Acknowledgments}

This material is in part based upon work supported by the National Science Foundation (NSF) under Grant No. 2401481, 2412484 and a Google Research Scholar Award. Any opinions, findings, and conclusions or recommendations expressed in this material are those of the author(s) and do not necessarily reflect the views of these institutes.


\bibliographystyle{abbrv} 
\bibliography{zkp,fair}

\appendices

\section{Additional Preliminaries}\label{app:sumcheck}

\begin{figure}[t!]
{\centering
\framebox{\parbox{.99\linewidth}{
\begin{protocol}[\textbf{Sumcheck protocol}]
	\label{prot::sumcheck}
        The protocol has $k$ rounds. 
	\begin{itemize}
		\item In the first round, $\P$ sends $\V$ a univariate polynomial $$f_1(x_1)\stackrel{\text{def}}{=}\sum\limits_{b_2,\ldots,b_k\in\{0,1\}}f(x_1,b_2,\ldots,b_k)\, ,$$ $\V$ checks $S=f_1(0)+f_1(1)$. Then $\V$ sends $\P$ a random challenge $r_1\in\F$.
		\item In the $i$th round, $\P$ sends $\V$ a univariate polynomial
		$$f_{i}(x_{i})\stackrel{\text{def}}{=}\sum\limits_{b_{i+1},\ldots,b_k\in\{0,1\}}f(r_1,\ldots, r_{i-1}, x_{i}, b_{i+1},\ldots, b_{k})\, ,$$ 
		$\V$ checks $f_{i-1}(r_{i-1})=f_{i}(0)+f_{i}(1)$, and sends $\P$ a random challenge $r_{i}\in\F$.
		\item In the last round, $\P$ sends a univariate polynomial $$f_{k}(x_{k})\overset{def}{=}f(r_1, r_2, \ldots, r_{l-1}, x_{k})\, ,$$ $\V$ checks $f_{\ell-1}(r_{k-1})=f_{k}(0)+f_{k}(1)$. 
            \item Finally, $\V$ generates a random challenge $r_{k}\in\F$ and checks if $f_{k}(r_k) = f(r_1, r_2, \ldots, r_k)$. $\V$ outputs 1 if and only if all the checks pass.  
	\end{itemize}
\end{protocol}}}}
\end{figure}

The sumcheck protocol is presented in Protocol~\ref{prot::sumcheck}. The GKR protocol is an interactive proof for layered arithmetic circuits proposed in~\cite{GKR}. For a circuit with $d$ layers, let $V_i:\binary^{c_i}\rightarrow\F$ denote the function defined by the output of the gates in the $i$-th layer of the circuit, where $C_i$ is the number of gates in the $i$-th layer and $c_i = \log C_i$, assuming $C_i$ is a power of 2 w.l.o.g. $V_0$ is defined by the output of the circuit, while $V_d$ is defined by the input of the circuit. Let $\tV_i$ be the MLE of $V_i$, then $\tV_{i-1}$ can be written as a sumcheck equation of $\tV_i$ as for all $\bm{x}\in\binary^{c_{i-1}}$:
\begin{align}
    \tV_{i-1}(\bm{x}) = \sum_{\bm{y},\bm{z}\in\binary^{c_i}}&(\tadd_i(\bm{x},\bm{y},\bm{z})(\tV_i(\bm{y})+\tV_i(\bm{z}))\nonumber\\
    &+\tmult_i(\bm{x},\bm{y},\bm{z})\tV_i(\bm{y})\tV_i(\bm{z})) \, ,
\end{align}
where $\tadd$ and $\tmult$ are MLEs of wiring predicates such that $add_i(\bm{x},\bm{y},\bm{z})=1$ ($mult_i(\bm{x},\bm{y},\bm{z})=1$) if and only if $(\bm{x},\bm{y},\bm{z})$ connects to an addition (multiplication) gate in layer $i$. With the equation and the sumcheck protocol, the GKR protocol is presented in Protocol~\ref{prot:gkr}.  

\begin{figure}[t!]
{\centering
\framebox{\parbox{.99\linewidth}{
\new{

\begin{protocol}[\textbf{GKR protocol}]
	\label{prot:gkr}
        The protocol proceeds layer by layer.  
	\begin{itemize}
		\item For the output layer, $\P$ sends $\V$ the claimed output of the circuit, from which $\V$ defines its MLE $\tV_0$ and evaluates it at a random point $\tV_0(\bm{r}_x^{(0)})$.

        \item For $i=1,2,\ldots, d$, 
        $\V$ sends $\bm{r}_x^{(i-1)}$ to $\P$, and executes the sumcheck protocol with $\P$ on:
        \begin{align*}
            \tV_{i-1}(&\bm{r}_x^{(i-1)}) =&\\
            &\sum_{\bm{y},\bm{z}\in\binary^{c_{i}}}&(\tadd_{i}(\bm{r}_x^{(i-1)},\bm{y},\bm{z})(\tV_{i}(\bm{y})+\tV_{i}(\bm{z}))\\
            &&+\tmult_{i}(\bm{r}_x^{(i-1)},\bm{y},\bm{z})\tV_{i}(\bm{y})\tV_{i}(\bm{z})).
        \end{align*}
        
        \item At the end of the sumcheck protocol, $\V$ needs to evaluate the polynomial $f$ in the sumcheck at $f(\bm{r}^{(i-1)}_x, \bm{r}^{(i)}_y,\bm{r}^{(i)}_z)$. To do so, $\V$ evaluates $\tadd_i(\bm{r}^{(i-1)}_x, \bm{r}^{(i)}_y,\bm{r}^{(i)}_z)$ and $\tmult_i(\bm{r}^{(i-1)}_x, \bm{r}^{(i)}_y,\bm{r}^{(i)}_z)$ locally, and asks the prover to send $\tV_i(\bm{r}^{(i)}_y),\tV_i(\bm{r}^{(i)}_z)$, and checks that they are consistent with the last step of the sumcheck protocol.

        \item $\V$ and $\P$ invokes a protocol to combine $\tV_i(\bm{r}^{(i)}_y),\tV_i(\bm{r}^{(i)}_z)$ into a single evaluation $\tV_i(\bm{r}^{(i)}_x)$, which is used for the sumcheck protocol in layer $i+1$. See~\cite{xielibra} for the detailed description.

		\item For the input layer, $\V$ checks that $\tV_d(\bm{r}^{(d)}_x)$ is correctly computed.  
	\end{itemize}
\end{protocol}}}}
}
\end{figure}

\paragraph{Polynomial commitment.} A PCS consists of the following algorithms:
\begin{itemize}
    \item $\pp\leftarrow\pcs.\Setup(1^\lambda,\mathcal{F})$: Given the security parameter and a family of polynomials, output the public parameters $\pp$.
    \item $\com_f\leftarrow\pcs.\Commit(f,\pp)$: On input a polynomial $f$ and the public parameter pp, output the commitment. 
    \item $(v,\pi)\leftarrow\pcs.\Open(f,u,\pp)$: Given an evaluation point $u$, output the evaluation $v$ and the proof $\pi$.
    \item $\binary\leftarrow\pcs.\Verify(\com_f,u,\pi,\pp)$: Verify the proof against the claim $f(u) = v$. 
\end{itemize}
A PCS scheme also satisfies correctness, knowledge soundness and zero-knowledge. Formal definitions can be found in~\cite{hyrax,zkvpd}. 

\paragraph{zkSNARK from GKR and PCS.} An argument scheme can be constructed by combining the GKR protocol with a PCS. The prover commits to the witness $w$ by $\com_w \leftarrow\pcs.\Commit(\tilde{w},\pp)$, where $\tilde{w}$ is the MLE defined by $w$. The prover and the verifier then execute the GKR protocol on a layered arithmetic circuit defining the relationship $\mathcal{R}$. At the end of the GKR protocol, the verifier needs to check $\tV_i(\bm{r}^{(d)}_x)$, the evaluation of the MLE of the input, as shown in Protocol~\ref{prot:gkr}. The prover opens the evaluation by $\pcs.\Open(\tilde{w},\bm{r}^{(d)}_x, \pp)$. The verifier checks the proof with $\pcs.\Verify$, and checks that the evaluation is consistent with the last step of the GKR protocol. 

This scheme can be made zero-knowledge using the zero-knowledge sumcheck proposed in~\cite{zksumcheck,xielibra}. It can be made non-interactive using the Fiat-Shamir transformation~\cite{fiat1986prove}. The scheme can support general circuits with arbitrary connections without any overhead on the prover time using the techniques in~\cite{zhang2021doubly}. 


\paragraph{Lookup arguments.} As shown in~\cite{habock2022multivariate}, to prove that $A = \ms{Lookup}(T)$, it suffices to show that 
\[
\sum_{\bm{x}\in\binary^{\log n}}\frac{1}{\gamma+\tilde{A}(\bm{x})} = \sum_{\bm{x}\in\binary^{\log N}}\frac{\tilde{m}(\bm{x})}{\gamma+\tilde{T}(\bm{x})} 
\]
at a random value $\gamma$, where $\tilde{A},\tilde{T}$ are MLEs defined by $A,T$ and $\tilde{m}$ is the MLE of $m$, the cardinality of elements in $T$ in $A$. This relationship can be proven by sumcheck protocols and polynomial commitments. Define MLEs $\tilde{h}_0(\bm{x}) = \frac{1}{\gamma+\tilde{A}(\bm{x})}$ for all $\bm{x}\in\binary^{\log n}$ and $\tilde{h}_1(\bm{x}) = \frac{\tilde{m}(\bm{x})}{\gamma+\tilde{T}(\bm{x})}$ for all $\bm{x}\in\binary^{\log N}$. The prover commits to $\tilde{A},\tilde{T},\tilde{m},\tilde{h}_0,\tilde{h}_1$ using $\pcs.\Commit$, and executes the sumcheck protocol on:
\begin{enumerate}
    \item $\sum_{\bm{y}\in\binary^{\log n}} \tilde{eq}(\bm{r}_x,\bm{y}) (\tilde{h}_0(\bm{y})(\gamma+\tilde{A}(\bm{y}))-1) = 0$.
    \item $\sum_{\bm{y}\in\binary^{\log N}} \tilde{eq}(\bm{r}'_x,\bm{y}) (\tilde{h}_1(\bm{y})(\gamma+\tilde{T}(\bm{y}))-\tilde{m}(\bm{y}) = 0$.
    \item $\sum_{\bm{x}\in\binary^{\log n}} \tilde{h}_0(\bm{x}) - \sum_{\bm{x}\in\binary^{\log N}}\tilde{h}_1(\bm{x}) = 0$.
\end{enumerate}
Finally, the prover opens these polynomial commitments at the random points checked in the last step of the sumcheck protocols. The verifier verifies the sumcheck protocol as well as the proofs of the PCS. 

This protocol is directly compatible with the zkSNARK based on GKR and PCS. The elements in $A$ can be from one layer of the circuit, and the first equation above can be executed together with the sumcheck in the GKR protocol for that layer. The evaluation of $\tilde{A}$ is reduced to previous layers in GKR, while the evaluations of $\tilde{T},\tilde{m},\tilde{h}_0,\tilde{h}_1$ are opened and verified using PCS.

\section{Deferred Proofs}\label{app:proof}

\subsection{Proof of Lemma~\ref{lemma:nonlinear}}

By the definition of $\delta_{h,i}^\ell$ and $\bm{h}^\ell$,
\begin{equation}\label{eq:defs2}
\begin{split}
    |\delta_{h,i}^\ell|&:=\left|\operatorname{mean}(h_{j,i}^\ell|s_j=0) - \operatorname{mean}(h_{j,i}^\ell|s_j=1)\right|\\
    &= \left|\frac{1}{|\mathcal{S}_{0}|} \sum_{v_j \in \mathcal{S}_{0}} h^\ell_{j,i}-\frac{1}{|\mathcal{S}_{1}|} \sum_{v_j \in \mathcal{S}_{1}} h^\ell_{j,i}\right|\\
    &= \left|\frac{1}{|\mathcal{S}_{0}|} \sum_{v_j \in \mathcal{S}_{0}} \sigma(z^\ell_{j,i})-\frac{1}{|\mathcal{S}_{1}|} \sum_{v_j \in \mathcal{S}_{1}} \sigma(z^\ell_{j,i})\right|\\
\end{split}
\end{equation}
For $i=0,1,\ldots, F_\ell-1$. For simplicity, we drop the superscript $\ell$ in the proof. We can write $z_{j,i}= \bar{z}_i^{(s)} + \delta^{(s)}_{j,i}$, $\forall v_j \in \mathcal{S}_{s}$ , where $\bar{z}_i^{(s)} = \frac{1}{|\mathcal{S}_{s}|} \sum_{v_j \in \mathcal{S}_{s}} z_{j,i}$ for $s=0,1$. By Equation~\ref{eq:lipschitz},
\begin{equation}
\label{eq:main_ineq}
\begin{split}
\operatorname{\sigma}(\bar{z}_i^{(0)}) - L|\delta^{(0)}_{j,i}|  &\leq \operatorname{\sigma}(z_{j,i})= \operatorname{\sigma}((\bar{z}_i^{(0)}) + \delta^{(0)}_{j,i})\\
&\leq \operatorname{\sigma}((\bar{z}_i^{(0)})) + L|\delta^{(0)}_{j,i}|, \forall v_j \in \mathcal{S}_{0} \\
  \end{split}
\end{equation}
and 
\begin{equation}
\label{eq:main_ineq2}
\begin{split}
\operatorname{\sigma}(\bar{z}_i^{(1)}) - L|\delta^{(1)}_{j,i}|  &\leq \operatorname{\sigma}(z_{j,i})= \operatorname{\sigma}((\bar{z}_i^{(1)}) + \delta^{(1)}_{j,i})\\
&\leq \operatorname{\sigma}((\bar{z}_i^{(1)})) + L|\delta^{(1)}_{j,i}|, \forall v_j \in \mathcal{S}_{1} \\
  \end{split}
\end{equation}

Based on Equations \eqref{eq:main_ineq}, and \eqref{eq:main_ineq2}, we have:
\begin{equation}
\begin{split}
&\frac{1}{|\mathcal{S}_{0}|} \sum_{v_j \in \mathcal{S}_{0}} \left(\operatorname{\sigma}(\bar{z}_i^{(0)}) - L|\delta^{(0)}_{j,i}|\right ) - \\
&\qquad\frac{1}{|\mathcal{S}_{1}|} \sum_{v_j \in \mathcal{S}_{1}}  \left(\operatorname{\sigma}(\bar{z}_i^{(1)}) + L|\delta^{(1)}_{j,i}| \right)\\
\leq &\frac{1}{|\mathcal{S}_{0}|} \sum_{v_j \in \mathcal{S}_{0}} \sigma(z_{j,i})-\frac{1}{|\mathcal{S}_{1}|} \sum_{v_j \in \mathcal{S}_{1}} \sigma(z_{j,i})\\
\leq &\frac{1}{|\mathcal{S}_{0}|} \sum_{v_j \in \mathcal{S}_{0}} \left( \operatorname{\sigma}(\bar{z}_i^{(0)}) + L|\delta^{(0)}_{j,i}|\right) - \\
&\qquad\frac{1}{|\mathcal{S}_{1}|} \sum_{v_j\in \mathcal{S}_{1}}  \left(\operatorname{\sigma}(\bar{z}_i^{(1)}) + L|\delta^{(1)}_{j,i}|\right)
\end{split}
\end{equation}
Therefore, 
\begin{equation}
\label{eq:before_norm}
\begin{split}
&\operatorname{\sigma}(\bar{z}_i^{(0)}) - \operatorname{\sigma}(\bar{z}_i^{(1)}) - \\
&\qquad L(\frac{1}{|\mathcal{S}_{0}|} \sum_{v_j \in \mathcal{S}_{0}} |\delta^{(0)}_{j,i}| - \frac{1}{|\mathcal{S}_{1}|} \sum_{v_j \in \mathcal{S}_{1}}  |\delta^{(1)}_{j,i}|)\\
\leq &\frac{1}{|\mathcal{S}_{0}|} \sum_{v_j \in \mathcal{S}_{0}} \sigma(z_{j,i})-\frac{1}{|\mathcal{S}_{1}|} \sum_{v_j \in \mathcal{S}_{1}} \sigma(z_{j,i})\\
\leq &\operatorname{\sigma}(\bar{z}_i^{(0)}) - \operatorname{\sigma}(\bar{z}_i^{(1)}) + \\
&\qquad L(\frac{1}{|\mathcal{S}_{0}|} \sum_{v_j \in \mathcal{S}_{0}} |\delta^{(0)}_{j,i}| + \frac{1}{|\mathcal{S}_{1}|} \sum_{v_j \in \mathcal{S}_{1}}  |\delta^{(1)}_{j,i}|)
\end{split}
\end{equation}
Therefore, Equation~\ref{eq:defs2} is bounded by
\begin{equation}
    \begin{split}
        &\left|\frac{1}{|\mathcal{S}_{0}|} \sum_{v_j \in \mathcal{S}_{0}} \sigma(z_{j,i})-\frac{1}{|\mathcal{S}_{1}|} \sum_{v_j \in \mathcal{S}_{1}} \sigma(z_{j,i})\right|\\
        \leq &|\operatorname{\sigma}(\bar{z}_i^{(0)}) - \operatorname{\sigma}(\bar{z}_i^{(1)})| + \\
        &\qquad L(\left|\frac{1}{|\mathcal{S}_{0}|} \sum_{v_j \in \mathcal{S}_{0}} |\delta^{(0)}_{j,i}|\right| +\left|\frac{1}{|\mathcal{S}_{1}|} \sum_{v_j \in \mathcal{S}_{1}}  |\delta^{(1)}_{j,i}|\right|) \\
        \leq &|\operatorname{\sigma}(\bar{z}_i^{(0)}) - \operatorname{\sigma}(\bar{z}_i^{(1)})|+2L\Delta_{z,i}\\
        \leq & L |\bar{z}_i^{(0)} -\bar{z}_i^{(1)}| + 2L\Delta_{z,i}.
    \end{split}
\end{equation}

Up to this point, the proof is exactly the same as Lemma~\ref{lemma:sigmoid} for each dimension $i\in[F_\ell]$. Taking the L2 norm of vectors, we have $\norm{\bm{\delta}_h^\ell}_2 \le L \norm{\bar{\bm{z}}^{\ell,(0)} -\bar{\bm{z}}^{\ell,(1)}}_2+2L\norm{\Delta_z^\ell}_2$.

\subsection{Proof of Error Terms}\label{app:error}

\new{
Let $\mat{V}, \mat{\Lambda}$ be the quantized eigenvectors and eigenvalues provided by the prover. The zkSNARK ensures that the following two equations hold: 

\begin{align}
    \mat{A} &= \mat{V}\cdot\mat{\Lambda}\cdot \mat{V}^T+\mat{E}\label{eq:a2}\\
    \mat{V}\cdot\mat{V}^T &= \mat{I}+\mat{E}'\label{eq:v2}
\end{align}

Let $\mat{Q}$ be the matrix where the columns are the accurate eigenvectors of $\mat{A}$ and $\mat{\Lambda}'$ be the diagonal matrix defined by the accurate eigenvalues. As $\mat{A} = \mat{W}\cdot \mat{W}^T$ is a symmetric positive semi-definite matrix, its eigenvalues and eigenvectors satisfy $\mat{A} = \mat{Q}\mat{\Lambda}'\mat{Q}^T$ and $\mat{Q}\cdot\mat{Q}^T = \mat{I}$. Hence \eqref{eq:a2} can be re-written as
\begin{align}
\mat{A}=\mat{Q}\cdot\mat{\Lambda}'\cdot\mat{Q}^T = \mat{V}\cdot\mat{\Lambda}\cdot \mat{V}^T+\mat{E}\label{eq:a3}
    \end{align}

As columns of $\mat{Q}$ form an orthonormal basis, $\mat{V}$ can be written as $\mat{V} = \mat{Q} \mat{C}$, where $\mat{C}=\mat{Q}^T\cdot \mat{V}$. Substituting into \eqref{eq:v2}, we have  
\begin{align}\label{eq:error}
    \mat{Q}\cdot\mat{C}\cdot\mat{C}^T\cdot\mat{Q}^T =& \mat{I}+\mat{E}'\nonumber\\
    \Rightarrow \mat{C}\cdot\mat{C}^T = &\mat{I}+\mat{Q}^T\mat{E}'\mat{Q}
\end{align}

Based on \eqref{eq:a3}, we have
\begin{align*}  
    \mat{\Lambda}' = &\mat{Q}^T\mat{V}\cdot\mat{\Lambda}\cdot \mat{V}^T\mat{Q}+\mat{Q}^T\cdot\mat{E}\cdot\mat{Q}\\
     = &\mat{C}\cdot\mat{\Lambda}\cdot \mat{C}^T+\mat{Q}^T\cdot\mat{E}\cdot\mat{Q}
\end{align*}
Rearranging the terms leads to 
\begin{align*}
\mat{Q}^T\cdot\mat{E}\cdot\mat{Q} = \mat{\Lambda}'- \mat{C}\cdot\mat{\Lambda}\cdot \mat{C}^T
\end{align*}
Taking Frobenious norms on both sides leads to
\begin{align*}
 \|\mat{E} \|^2_F = & \| \mat{\Lambda}'-\mat{C}\mat{\Lambda}\mat{C}^T\|_F\\
 = &\sum_{i=1}^N (\lambda_i'-\lambda_i\mat{c}_i^\top \mat{c}_i)^2+\sum_{i=1}^N \sum_{j\neq i} (\mat{c}_i^\top \lambda_i \mat{c}_j^\top)^2
\end{align*}

Hence, the following equality holds:
\begin{align}\label{eq:lambda2}
   &\sum_{i=1}^N (\lambda_i'-\lambda_i\mat{c}_i^\top \mat{c}_i)^2 \le \|\mat{E} \|^2_F \nonumber\\
   \Rightarrow& |\lambda_i'-\lambda_i\mat{c}_i^\top \mat{c}_i)| \le \|\mat{E} \|_F \quad \forall i\nonumber\\
   \Rightarrow &\lambda_i\mat{c}_i^\top \mat{c}_i-\|\mat{E}\|_F \le \lambda_i'\le \lambda_i\mat{c}_i^\top \mat{c}_i+\|\mat{E} \|_F \quad \forall i
\end{align}

According to \eqref{eq:error}, $\mat{c}_i^\top \mat{c}_i$ is the $i$-th value of $C\cdot C^T$ on the diagonal, and is thus equal to $1+e_{ii}''$, where $e_{ii}''$ is the $i$-th element on the diagonal of $\mat{Q}^T\mat{E}'\mat{Q}$. By the Cauchy-Schwarz inequality, $|e_{ii}''|\le \norm{\bm{q}_i}_2^2 \norm{\mat{E}'}_2 =  \norm{\mat{E}'}_2$. Combining with \eqref{eq:lambda2} we have
\begin{align*}
     \lambda_i(1-\norm{\mat{E}'}_2)-\|\mat{E}\|_F \le \lambda_i'\le \lambda_i(1+\norm{\mat{E}'}_2) +\|\mat{E}\|_F\quad \forall i
\end{align*}

Hence if both $\norm{\mat{E}'}_2$ and $\|\mat{E}\|_F$ are small, $\lambda_i$ is close to $\lambda_i'$, the real eigenvalue. 
}

\section{zkSNARK for Computing Aggregated Statistics}\label{app:delta}

\begin{figure}[h!]
   \centering
    \begin{minipage}{\linewidth}
   
   \begin{algorithm}[H]

\raggedright
	\textbf{Public input:} commitment $\ms{com}_{\mat{D}}$ of a dataset $\mat{D}\in\F^{N\times F}$ \\
        \textbf{Witness:} the dataset $\mat{D}$\\
	\textbf{Output:} $\bm{\delta}_x,\Delta_x$ 
	\begin{algorithmic}[1]		
        \State $\ms{Quantize\_Validate}(\mat{D})$ 
        \State $|\S_0|=0$
        \For{$j=0,1,\ldots, N-1$}
                \State $|\S_0| = (1-s_j) + |\S_0|$
        \EndFor
        \State $|\S_1|= N-|\S_0|$
        \For{$i=0,1,\ldots, F-1$}
            \State $\bar{x}^{(0)}_i=\bar{x}^{(1)}_i=0$
            \For{$j=0,1,\ldots, N-1$}
                \State $\bar{x}^{(0)}_i = x_{j,i}\cdot (1-s_j) + \bar{x}^{(0)}_i$
                \State $\bar{x}^{(1)}_i = x_{j,i}\cdot s_j + \bar{x}^{(1)}_i$
            \EndFor
                \State $\bar{x}^{(0)}_i = \frac{1}{|\S_0|}\bar{x}^{(0)}_i$, $\bar{x}^{(1)}_i = \frac{1}{|\S_1|}\bar{x}^{(1)}_i$
        \EndFor

        \For{$i=0,1,\ldots, F-1$}
            \For{$j=0,1,\ldots, N-1$}
                \State $x_{j,i}=x_{j,i}-\bar{x}_i^{(0)}(1-s_j)-\bar{x}_i^{(1)}s_j$
                \State $|x_{j,i}| = \ms{Absolute}(x_{j,i})$
            \EndFor
            \State $\Delta_{x,i} = \max(|\bm{x}_i|)$. 
        \EndFor

        \State Output $\bm{\delta}_x = (\bar{x}^{(0)}_i-\bar{x}^{(1)}_i)_{i\in[F]}, \Delta_x = (\Delta_{x,i})_{i\in[F]}$. 
	\end{algorithmic}
	\end{algorithm}
 \end{minipage}

    \caption{zkSNARK for computing $\bm{\delta}_x,\Delta_x$ of a dataset. }
    \label{fig:delta}
\end{figure}

\clearpage

\section{Meta-Review}

The following meta-review was prepared by the program committee for the 2025
IEEE Symposium on Security and Privacy (S\&P) as part of the review process as
detailed in the call for papers.

\subsection{Summary}
The paper proposes FairZK, a system for proving the fairness of ML models using zero-knowledge proofs (ZKPs). The system adopts a novel approach based on new fairness bounds that depend on model parameters and aggregate statistics of the dataset used. This results in much better performance and increased scalability compared to prior solutions that rely on ZKPs for inference to establish the model's fairness.

\subsection{Scientific Contributions}
\begin{itemize}
\item Creates a New Tool to Enable Future Science
\item Provides a Valuable Step Forward in an Established Field
\end{itemize}

\subsection{Reasons for Acceptance}
\begin{enumerate}
\item The proposed system adopts a novel approach for proving ML model fairness and significantly outperforms prior work.
\item The inferred bounds may be of independent interest and may inspire future research for tighter results.
\end{enumerate}

\subsection{Noteworthy Concerns} 
While the proposed scheme achieves better bounds (in absolute values) than prior works for provable fairness, it is not clear exactly what these values mean in terms of how ``fair'' the result is. This is a broader issue which is not specific to the proposed scheme, but the current evaluation does not show how sensitive these achieved bounds are with respect to different datasets and/or mode parameters.

\end{document}